%% file: main.tex
\documentclass[sigconf]{aamas}

\usepackage{breqn}
\usepackage{balance} 

\usepackage{xcolor}
\usepackage{xspace}
\usepackage{algorithm}
\usepackage{amsmath}
\usepackage{natbib}
\usepackage{mathrsfs} 
\usepackage{algorithm}
\usepackage{subcaption}
\usepackage{algpseudocode}
\usepackage{array}
\usepackage{multirow}
\usepackage{hyperref}

\usepackage{comment}

\usepackage{amsthm}

\setcopyright{ifaamas}
\renewcommand\footnotetextcopyrightpermission[1]{}
\acmConference[AAMAS '26]{Proc.\@ of the 25th International Conference
on Autonomous Agents and Multiagent Systems (AAMAS 2026)}{May 25 -- 29, 2026}
{Paphos, Cyprus}{C.~Amato, L.~Dennis, V.~Mascardi, J.~Thangarajah (eds.)}
\copyrightyear{2026}
\acmYear{2026}
\acmDOI{}
\acmPrice{}
\acmISBN{}

\acmSubmissionID{615}

\title[AAMAS-2026 Formatting Instructions]{Conflict-Based Search for Multi Agent Path Finding with Asynchronous Actions}

\author{\href{https://orcid.org/0009-0000-5354-590X}{Xuemian Wu}}
\affiliation{
  \institution{Shanghai Jiao Tong University}
  \city{Shanghai}
  \country{China}}
\email{xuemian.wu@sjtu.edu.cn}

\author{\href{https://orcid.org/0000-0002-2351-6152}{Shizhe Zhao}}
\affiliation{
  \institution{Shanghai Jiao Tong University}
  \city{Shanghai}
  \country{China}}
\email{shizhe.zhao@sjtu.edu.cn}

\author{\href{https://orcid.org/0000-0003-2880-8653}{Zhongqiang Ren$^{\dagger}$}}
\affiliation{
  \institution{Shanghai Jiao Tong University}
  \city{Shanghai}
  \country{China}}
\email{zhongqiang.ren@sjtu.edu.cn}

\begin{abstract}

\input{abstract}
\end{abstract}

\keywords{Conflict-based Search; Multi Agent Path Finding; Asynchronous Actions}

\newcommand{\BibTeX}{\rm B\kern-.05em{\sc i\kern-.025em b}\kern-.08em\TeX}

\newtheorem{theorem}{Theorem}
\newtheorem{lemma}{Lemma}
\newtheorem{definition}{Definition}

\newtheorem{assumption}{Assumption}

\newtheorem{remark}{Remark}
\newtheorem{example}{Example}

\newcommand{\abbrMAPF}{MAPF\xspace}
\newcommand{\abbrMAPFAA}{MAPF-AA\xspace}

\newcommand{\abbrMAPFR}{\textnormal{MAPF\textsubscript{R}}\xspace}
\newcommand{\abbrCBS}{CBS\xspace}
\newcommand{\abbrCCBS}{CCBS\xspace}
\newcommand{\abbrCSIPP}{CSIPP\xspace}
\newcommand{\abbrCBSAA}{CBS-AA\xspace}
\newcommand{\abbrSMWTP}{CSA\xspace} 
\newcommand{\abbrMMWTI}{CMA\xspace} 
\newcommand{\abbrMMWTIS}{CMAS\xspace} 
\newcommand{\abbrSIPP}{SIPP\xspace}
\newcommand{\abbrSIPPS}{SIPPS\xspace}
\newcommand{\abbrSIPPSWC}{SIPPS-WC\xspace}
\newcommand{\abbrLSS}{LS-M*\xspace}
\newcommand{\movein}{\texttt{IN}\xspace}
\newcommand{\moveout}{\texttt{OUT}\xspace}
\newcommand{\trans}{\ensuremath{A}\xspace}
\newcommand{\intv}{\ensuremath{\tau}\xspace} 
\newcommand{\wait}{\texttt{WAIT}\xspace}
\newcommand{\inin}{\texttt{IN-IN}\xspace}
\newcommand{\outin}{\texttt{OUT-IN}\xspace}
\newcommand{\waitin}{\texttt{WAIT-IN}\xspace}
\newcommand{\cftij}{\ensuremath{\langle\trans^i,\trans^j,v\rangle}\xspace}
\newcommand{\Fv}[1]{\ensuremath{v_f(#1)}\xspace}
\newcommand{\Tv}[1]{\ensuremath{v_t(#1)}\xspace}
\newcommand{\Ft}[1]{\ensuremath{t_f(#1)}\xspace}
\newcommand{\Tt}[1]{\ensuremath{t_t(#1)}\xspace}
\newcommand\procName[1]{\textsl{#1}}
\newcommand{\durin}{\ensuremath{\tau_{in}}\xspace}
\newcommand{\durout}{\ensuremath{\tau_{out}}\xspace}

\newcommand{\cstrm}[4]{\ensuremath{\langle{#1},#2\rightarrow #3,#4\rangle_{m}}\xspace}
\newcommand{\cstrw}[3]{\ensuremath{\langle{#1},#2, #3 \rangle_{w}}\xspace}
\newcommand{\cstro}[3]{\ensuremath{\langle{#1},#2, #3 \rangle_{o}}\xspace}

\begin{document}


\pagestyle{fancy}
\fancyhead{}



\maketitle 


\section{Introduction}\label{sec:intro}
\input{introduction}

\section{Problem Formulation}\label{sec:problem}

\input{problem_formulation}

\section{Preliminaries}\label{sec:preliminaries}
\input{preliminaries}

\section{Method}\label{sec:method}

\input{method}

\input{analysis_v2}

\section{Experimental Results}\label{sec:expr}
\input{result}

\section{Conclusion and Future Work}\label{sec:conclude}
\input{conclude}

\begin{acks}
This work was supported by the Natural Science Foundation of China under Grant 62403313, and the Natural Science Foundation of Shanghai under Grant 24ZR1435900.
\end{acks}



\bibliographystyle{ACM-Reference-Format} 
\bibliography{./sample}


\end{document}

%% file: abstract.tex
Multi-Agent Path Finding (MAPF) seeks collision-free paths for multiple agents from their respective start locations to their respective goal locations while minimizing path costs.
Most existing MAPF algorithms rely on a common assumption of synchronized actions, where the actions of all agents start at the same time and always take a time unit, which may limit the use of MAPF planners in practice.
To get rid of this assumption, Continuous-time Conflict-Based Search (CCBS) is a popular approach that can find optimal solutions for MAPF with asynchronous actions (MAPF-AA).
However, CCBS has recently been identified to be incomplete due to an uncountably infinite state space created by continuous wait durations.
This paper proposes a new method, Conflict-Based Search with Asynchronous Actions (CBS-AA), which bypasses this theoretical issue and can solve MAPF-AA with completeness and solution optimality guarantees.
Based on CBS-AA, we also develop conflict resolution techniques to improve the scalability of CBS-AA further.
Our test results show that our method can reduce the number of branches by up to 90\%.

%% file: introduction.tex
Multi-Agent Path Finding (MAPF) seeks collision-free paths for multiple agents from their respective start locations to their respective goal locations while minimizing path costs.
The environment is often represented by a graph, where vertices represent the locations that the agent can reach, and edges represent actions that transit the agent between two locations.
MAPF is NP-hard to solve optimally~\cite{yu2013structure}, and a variety of MAPF planners were developed, ranging from optimal planners~\cite{sharon2015conflict,wagner2015subdimensional}, bounded sub-optimal planners~\cite{barer2014suboptimal,li2021eecbs}, to unbounded sub-optimal planners~\cite{okumura2022priority,de2013push}.
A common underlying assumption in these planners is that each action of an agent, either waiting in place or moving to an adjacent vertex, takes the same duration, i.e., a time unit, and the actions of all agents are synchronized, i.e., the action of each agent starts at the same discrete time step.
This assumption limits the application of MAPF planners, especially when the agent speeds are different or an agent has to vary its speed when going through different edges (Fig.~\ref{fig:f1}).

\begin{figure}[tb]
    \centering
    \includegraphics[width=1\linewidth]{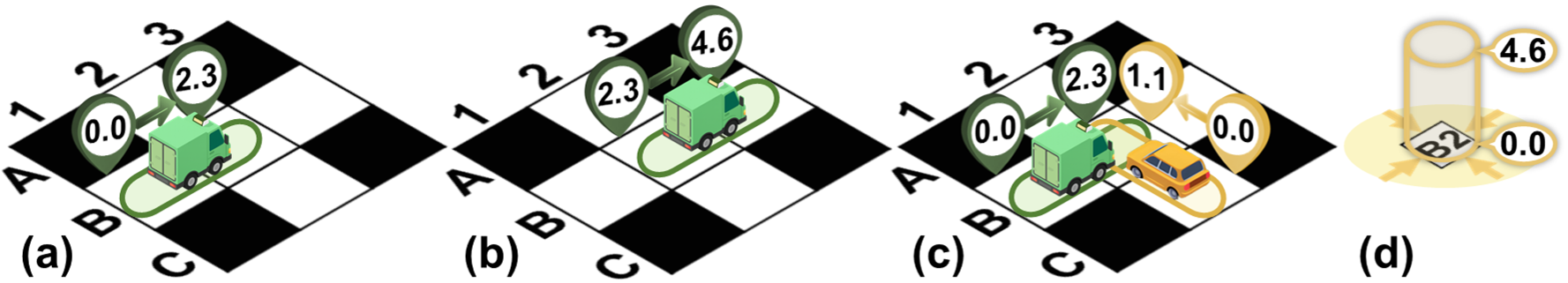}
    \caption{A motivating example of \abbrMAPFAA where the yellow car moves fast and the green truck moves slowly in continuous time. The circled numbers show the time points: e.g., in (a), the truck moves from B1 to B2 during the time range $[0.0,2.3]$. This work considers the agent to occupy both ends of an edge when the agent goes through it. As a result, a constraint (as shown in (d)) at B2 with time range $[0.0,4.6]$ is imposed on the yellow car to avoid collision as shown in (c).
    }
    \label{fig:f1}
\end{figure}

To bypass this synchronous action assumption, MAPF variants such as Continuous-Time MAPF~\cite{ANDREYCHUK2022103662}, MAPF with Asynchronous Actions (MAPF-AA)~\cite{ren2021loosely}, MAPF$_R$~\cite{walker2018extended} were proposed.
The major idea in those variants is that, the actions of agents can take different amounts of time, and as a result, the agents may not start and end each of their actions at the same discrete time steps.
Among the exact algorithms that can find optimal solutions, Continuous-time Conflict-Based Search (CCBS)~\cite{ANDREYCHUK2022103662} is a leading approach that extends \abbrCBS to handle various action durations.

This paper focuses on exact algorithms for MAPF-AA, which considers duration conflict~\cite{ren2021loosely,okumura2021time} in the sense that when an agent traverses an edge within a time range, the agent occupies both end vertices of the edge during that time range and no two agents can occupy the same vertex at the same time.
CCBS~\cite{ANDREYCHUK2022103662} can be applied to solve MAPF-AA. However, a naive application can lead to incompleteness due to the infinite number of possible durations for the wait action~\cite{li2025cbs}, and may not be able to find a solution even if the instance is solvable.
Recent works~\cite{li2025cbs,combrink2025sound} have analyzed and discussed the completeness issue of CCBS. 
This paper provides a possible fix by developing a new exact algorithm called Conflict-Based Search with Asynchronous Actions (CBS-AA) for MAPF-AA.
Besides CCBS~\cite{ANDREYCHUK2022103662}, our prior work has studied scalable yet unbounded sub-optimal algorithms for \abbrMAPFAA~\cite{zhou2025loosely,2025_SOCS_LSRPstar_ShuaiZhou}.
In terms of exact algorithms, our prior work developed an A*/M*-based exact algorithm for \abbrMAPFAA called Loosely Synchronized M* (\abbrLSS)~\cite{ren2021loosely}, which often runs slower than CCBS~\cite{ANDREYCHUK2022103662}.

Additionally, in MAPF-AA, it takes a different amount of time when different agents go through the same edge, or the same agent goes through different edges.
Such heterogeneous duration tends to complicate the collision avoidance among the agents, and disables many conflict resolution techniques in CBS for MAPF, which limits the scalability of the exact algorithms for MAPF-AA.
To improve scalability, we develop constraint propagation techniques based on the agents' action duration within CBS-AA.
The intuition behind these techniques is that when two agents collide, we add constraints to the agents to forbid as many actions as possible, and forbid each action as long as possible, so that CBS-AA can resolve collisions in fewer iterations.

We compare different variants of our \abbrCBSAA with the existing \abbrCCBS.
The results show that with the proposed constraint propagation techniques, the success rate of CBS-AA is significantly higher than that of \abbrCCBS~\cite{ANDREYCHUK2022103662}, and the number of iterations in high-level is reduced by up to 90\%.
We also compare our CBS-AA with \abbrLSS~\cite{ren2021loosely}. 
The results show that \abbrCBSAA can find the optimal solution faster, and the costs of the optimal solutions are the same as those of \abbrLSS.

%% file: problem_formulation.tex
Let $I = \{1, 2, \dots, N\}$ be the index set of $N$ agents, where each $i \in I$ corresponds to a specific agent.
The workspace is represented by an undirected graph $G=(V,E)$, where $V$ is the set of traversable vertices, and $E\subset V\times V$.
Each edge $e=(u,v) \in E$ represents an action that moves an agent from $u$ to an adjacent vertex $v$.
The travel time of each edge may vary for each agent.
Let $\tau^i(u,v) \in \mathbb{R}_{\ge 0}$ denote the travel time for agent $i$ moving from $u$ to $v$.
All agents can wait at a vertex for an arbitrary amount of time.

Let $s^i=(v, t)$ denote the space-time state of agent $i$, 
and let $\trans^i=(s^i_1, s^i_2)$ denote a state transition from $s^i_1$ to $s^i_2$.
Given $\trans^i=((v^i_1,t^i_1), (v^i_2, t^i_2))$, 
let $\Fv{\trans^i}, \Ft{\trans^i}$ and $\Tv{\trans^i}, \Tt{\trans^i}$ denote the space and time components for $s^i_1$ and $s^i_2$, respectively, and let $\intv(\trans^i)$ denote the duration of $A^i$.
For any action, $t^i_2 = t^i_1 + \intv(\trans^i)$.
For a \emph{move} action, $v^i_2$ is adjacent to $v^i_1$, and $\intv(\trans^i) = \tau^i(v^i_1,v^i_2)$ is given by input.
For a \emph{wait} action, we have $v^i_1 = v^i_2$, and the duration $\intv(\trans^i) \in \mathbb{R}_{\ge 0}$ is determined by the planner.
We use the same conflict model from existing work~\cite{ren2021loosely,okumura2021time}, as illustrated below.
\begin{definition}[Duration Occupancy]\label{def:DO}
When agent $i$ performs an action $((v^i_1,t^i_1),(v^i_2,t^i_2))$, both $v^i_1$ and $v^i_2$ are occupied by $i$ during the action, 
which is called Duration Occupancy (DO).
Specifically, $v^i_1$ is occupied at time $t^i_1$, $v^i_2$ is occupied at time $t^i_2$,
and both $v^i_1,v^i_2$ are occupied during $(t^i_1, t^i_2)$.
At any time point, a vertex can only be occupied by at most one agent.
Multiple agents are in conflict if they both occupy the same vertex for a non-empty time interval, 
which is referred to as Duration Conflict (DC).
\end{definition}



Let $\pi^i(v_s, v_g)=(a^i_0=(v_s,0), a^i_1,\dots, a^i_k=(v_g,t_k))$ denote a path from 
$v_s$ to $v_g$. The cost of $\pi^i$ is $t_k$, denoted as $g(\pi^i)$, 
which is the time point it reaches $v_g$ and can permanently stay after $t_k$ without conflict.
Let $V_s, V_g$ denote the set of start and goal locations of all agents, and $v_s^i \in V_s, v_g^i \in V_g$ denote the start and goal location of agent $i$ respectively.
We assume there is no conflict when all agents stay at their start or goal locations.
Multi Agent Path Finding with Asynchronous Action (\abbrMAPFAA) $P = \langle V, E, V_s, V_g \rangle$ seeks to find a set of conflict-free paths
such that (1) each agent $i\in I$ starts at $v_s^i$ and ends at $v_g^i$; and (2) minimize the sum of costs (SoC) of all agents' paths, i.e., $\min \sum_{i\in I}g(\pi^i)$.



%% file: preliminaries.tex
\paragraph{CBS}
Conflict-Based Search (\abbrCBS)~\cite{sharon2015conflict} is a two-level search algorithm that finds an optimal joint path for \abbrMAPF.
At the high-level, \abbrCBS constructs a search tree and 
starts with a root node which consists of all agents’ individually optimal path ignoring any conflict.
And then \abbrCBS selects a specific node that has the smallest $g$-value (sum of costs), and detects conflicts along the paths of any pair of agents.
According to the detected conflict, two constraints are generated corresponding to the two sub-trees.
For each of those two constraints, \abbrCBS runs a low-level search to find a new path satisfying the constraints.
At the low-level, a single-agent planner is invoked to plan an optimal path that satisfies all constraints related to a specific agent.
\abbrCBS guarantees finding a conflict-free joint path with the minimal sum of costs.

\paragraph{\abbrCCBS}
Continue Conflict-Based Search (\abbrCCBS)~\cite{ANDREYCHUK2022103662} is a \abbrCBS-based method and can solve the \abbrMAPFR problem to optimality. 
It assumes the travel time for an edge is real-valued and non-uniform, and different edges have different travel times.
Therefore, the time is continue and the action of agents is asynchronous.
To detect conflicts, \abbrCCBS assumes that all agents move in a straight line at a constant speed and detects collisions based on agents' geometry, such as circles.
To resolve conflicts, \abbrCCBS adds constraints over pairs of actions and time ranges, instead of location-time pairs.

Specifically, for a conflict $\langle(a^i,t^i),(a^j,t^j)\rangle$, which means that agent $i$ performs action $a^i$ at $t^i$ and agent $j$ performs action $a^j$ at $t^j$, and they collide, \abbrCCBS computes for each action an unsafe intervals w.r.t the other's action.
The unsafe interval $[t^i,t_u^i)$ of $(a^i,t^i)$ w.r.t. $(a^j,t^j)$ is the maximal time interval starting from $t^i$ in which if agent $i$ performs $a^i$ then it is in conflict with the action $(a^j,t^j)$.
\abbrCCBS adds to agent $i$ the constraint $\langle i,a^i,[t^i,t_u^i)\rangle$, which means agent $i$ cannot perform action $a^i$ in the range $[t^i, t_u^i)$), and adds to agent $j$ the constraint $\langle j,a^j,[t^j,t_u^j)\rangle$.

The low-level solver of \abbrCCBS is Constrained Safe Interval Path Planning (\abbrCSIPP), where safe intervals for a vertex are computed based on \abbrCCBS constraints.
In detail, a \abbrCCBS constraint $\langle i,a^i_w,[t^i,t_u^i)\rangle$ about a wait action $a^i_w$ of vertex $v$ will divide the safe interval of $v$ to two parts: one that ends at $t^i$ and another that starts at $t_u^i$.
If the action $a^i_m$ is a move action related to edge $e = (v,v')$, \abbrCCBS replaces the action $a^i_m$ with action $a^{i}_{m'}$ that starts by waiting in $v$ for duration $t_u^i-t^i$ before moving to $v^{'}$.

\abbrCCBS has issues as reported in~\cite{li2025cbs}.
\abbrCCBS fails to resolve conflicts for wait actions, since the time is continue, there are infinitely many wait actions with duration time $\tau_w \in \mathbb{R}_{\ge 0}$.
The constraint added by branching each time in \abbrCCBS only focuses on a specific duration of the wait action, which can lead to infinite number of branching when resolving conflict caused by wait actions.
For example, wait action $a_w^i = (B,B,2.0)$ means waiting at vertex $B$ for a duration of $2.0$, and \abbrCCBS adds a constraint $\langle i,a_w^i,[t^i,t_u^i)\rangle$ to prohibit agent $i$ from performing $a_w^i$ at time $t \in [t^i,t_u^i) $. 
But agent $i$ still can preform wait action $a_{w1}^i = (B,B,2.01), a_{w2}^i = (B,B,2.001), a_{w3}^i = (B,B,2.0001)...$ at time $t \in [t^i,t_u^i) $.
So, \abbrCCBS may not terminate when there is a wait action.

In the open-sourced implementation\footnote{\url{https://github.com/PathPlanning/Continuous-CBS}}, \abbrCCBS makes a change when transferring constraints about wait action to low-level solver.
For the previous constraint $\langle i,a^i_w,[t^i,t_u^i)\rangle$, \abbrCSIPP divides the safe interval of $B$ to two parts: $[0,t^i)$ and $[t_u^i,\infty)$, which means that agent $i$ can not perform any wait action $a = (B,B,\tau_w), \tau_w \in \mathbb{R}_{\ge 0}$ in $[t^i,t_u^i)$.
However, ``not perform any wait action $a = (B,B,\tau_w), \tau_w \in \mathbb{R}_{\ge 0}$ in $[t^i,t_u^i)$'' is not equivalent to ``not perform wait action $a^i_w = (B,B,2.0)$ in $[t^i,t_u^i)$''. 
This inconsistency may miss feasible solutions during branching and can not guarantee completeness and optimality.
For the rest of the paper, we refer to this implemented version as \abbrCCBS.


\begin{figure}[tb]
    \centering
    \includegraphics[width=1\linewidth]{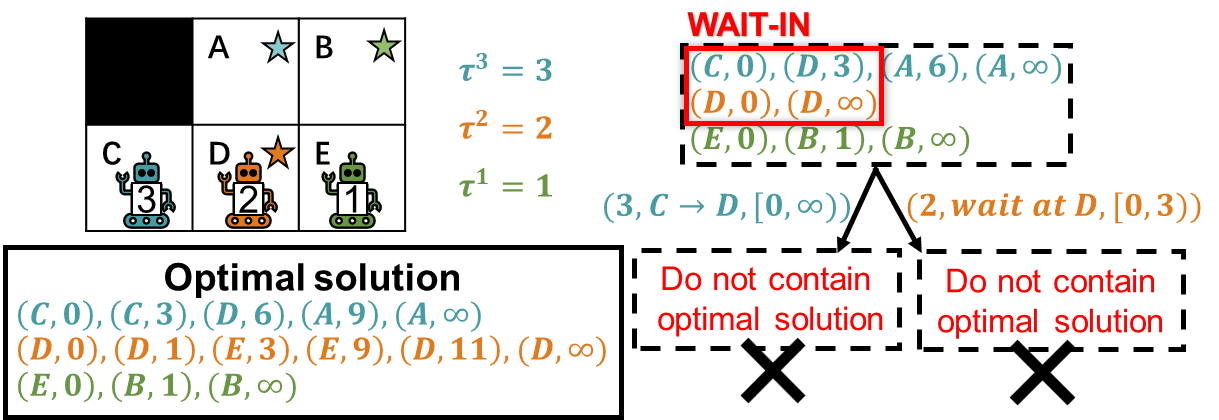}
    \caption{Toy example for the issues of \abbrCCBS. }
    \label{fig:toyexample_ccbs}
\end{figure}

\begin{example}\label{ex:toyexample_ccbs}
As shown in Fig.~\ref{fig:toyexample_ccbs}, there are three agents $I=\{1,2,3\}$.
The duration of all move actions of agent $1$, $2$ and $3$ is $\tau^1 = 1$, $\tau^2 = 2$ and $\tau^3 = 3$, respectively.
In the root node of high-level, agent $2$ and $3$ have a conflict $\langle(a^2,t^2=0),(a^3,t^3=0)\rangle$ where $a^2 = (D,D,\infty)$ is a wait action with $\infty$ duration time, $a^3 = (C,D,3)$ is a move action from $C$ to $D$.
By Def.~\ref{def:DO}, the unsafe interval of $(a^2,t^2=0)$ w.r.t. $(a^3,t^3=0)$ is $[0, 3)$ and the unsafe interval of $(a^3,t^3=0)$ w.r.t. $(a^2,t^2=0)$ is $[0, \infty)$. 
By the implementation of \abbrCCBS, two constraints are generated, $(3,C \rightarrow D, [0, \infty)])$ forbids agent $3$ to perform action $a^3$ in the range $[0, \infty)$ and $(2,wait\;at\;D, [0, 3))$ forbids agent $2$ to perform any wait action $a = (D,D,\tau_w), \tau_w \in \mathbb{R}_{\ge 0}$ in the range $[0, 3)$. 
The sub-trees of both branches do not contain the optimal solution. 
\end{example}

\paragraph{\abbrLSS}
Loosely Synchronized M* (\abbrLSS)~\cite{ren2021loosely} solves \abbrMAPFAA by introducing new search states that include both the locations and the action times of the agents.
Similar to A*, LSS iteratively selects states from an open list, expands them to generate successors, prunes those that are either conflicting or less promising, and inserts the remaining ones into the open list for future expansion.
This process continues until a conflict-free joint path from the start locations to the goal locations is found for all agents.
\abbrLSS further introduce the idea of subdimensional expansion~\cite{wagner2015subdimensional} into LSS and can handle more agents than LSS.
\abbrLSS is complete and finds an optimal solution for \abbrMAPFAA, but can only handle a relatively small number of agents.

%% file: method.tex
This section proposes Conflict Based Search with Asynchronous Action (\abbrCBSAA), which finds an optimal solution for \abbrMAPFAA. 
We first modify \abbrCCBS to effectively resolve conflicts and call this modified method Constraint on Single Action (\abbrSMWTP).
Then, we use DO to propagate constraints and resolve conflicts efficiently, which we call Constraint on Multiple Actions (\abbrMMWTI).


\paragraph{Overview} \abbrCBSAA (Alg.~\ref{alg:CBSAA}) is similar to \abbrCBS with three processes modified: \procName{LowLevelPlan}, \procName{DetectConflict} and \procName{GenerateConstraints}.
\procName{LowLevelPlan} adapts Safe Interval Path Planning (\abbrSIPP) ~\cite{phillips2011sipp} to handle continuous-time.
The numbers associated with safe intervals and constraints are all positive real numbers.
We cut safe intervals into continuous time intervals according to the constraints, rather than a set of discrete time steps.
\procName{DetectConflict} detects conflicts in the continuous time range. 
When there is overlap in the time intervals for two agents to occupy a same vertex, a conflict is returned.
In \procName{GenerateConstraints}, we propose two different conflict resolution methods for \abbrMAPFAA, \abbrSMWTP and \abbrMMWTI, as detailed later.
\input{algs/cbs}


\input{method_conflict}

%% file: algs/cbs.tex
\begin{algorithm}[tbp]
    \small
	\caption{\abbrCBSAA}\label{alg:CBSAA}
	\begin{algorithmic}[1]
        \Statex{INPUT: $G=(V,E)$}
        \Statex{OUTPUT: a conflict-free joint path $\pi$ in $G$.}
		\State{$\Omega_c \gets \emptyset$, $\pi_{},g_{} \gets$ \procName{LowLevelPlan}($ \Omega_c$)}
		\State{Add $P_{root,1}=(\pi,g,\Omega_c)$ to OPEN}\label{cbssapx:alg:cbssa:lineInitEnd}
		\While{$\text{OPEN} \neq \emptyset$} 
		\State{$P=(\pi,g,\Omega_c) \gets$ OPEN.pop()}\label{cbssapx:alg:cbssa:linePopOpen}
		\State{$cft \gets$ \procName{DetectConflict}($\pi$)}\label{cbssapx:alg:cbssa:lineDetectCft}
		\State{\textbf{if} $cft = NULL$ \textbf{then} \textbf{return} $\pi$ }
		\State{$\Omega \gets $ \procName{GenerateConstraints}($cft$)}\label{cbssapx:alg:cbssa:lineGenCstr}
		\ForAll{$\omega^i \in \Omega$}
		\State{$\Omega' = \Omega_c \cup \{\omega^i\}$}
            \State{$\pi',g' \gets$ \procName{LowLevelPlan}\label{cbssapx:alg:cbssa:lineLowLevelReplan}($\Omega'$)}
		\State{Add $P'=(\pi',g',\Omega')$ to OPEN}\label{cbssapx:alg:cbssa:lineAddOpen}
		\EndFor
		\EndWhile \label{}
		\State{\textbf{return} failure}
	\end{algorithmic}
\end{algorithm}

%% file: method_conflict.tex
\subsection{Conflict Detection and Classification}\label{sec:detect}
For a vertex $v$, there are three types of actions:
\begin{equation}
\begin{cases}
	\movein:&\{\trans^i | \Tv{\trans^i}=v\} \\
	\moveout:&\{\trans^i | \Fv{\trans^i}=v\}\\
	\wait:&\{\trans^i | \Fv{\trans^i}=\Tv{\trans^i}=v\}
\end{cases}
\end{equation}
If agent $i$ wants to go through $v$, it must perform these three actions $\trans^i_I \in \movein$, $\trans^i_W \in \wait$ and $\trans^i_O \in \moveout $ at $v$ in sequence.
Let $\intv(\trans^i, v)$ denote the time interval during which the transition of $i$ occupies vertex $v$ based on Def.~\ref{def:DO}. 
If agent $i$ performs $\trans^i_I$ at $t$, $\intv(\trans^i_I, v) = (t,t+\tau(\trans^i_I)]$, $\intv(\trans^i_W, v) = [t+\tau(\trans^i_I),t+\tau(\trans^i_I)+\tau(\trans^i_W)]$ and $\intv(\trans^i_O, v) = [t+\tau(\trans^i_I)+\tau(\trans^i_W),t+\tau(\trans^i_I)+\tau(\trans^i_W)+\tau(\trans^i_O)]$.
If $i$ does not need to wait at $v$, the duration $\tau(\trans^i_W)$ is $0$ and the wait action $\trans^i_W$ occupies $v$ only at one time point $t+\tau(\trans^i_I)$.

Two agent $i$ and $j$ are in conflict if there is a $v$ such that $\intv(\trans^i, v) \cap \intv(\trans^j, v) \neq \emptyset$. 
Let $\cftij$ denote a duration conflict between agents $i$ and $j$ that both occupy the same vertex $v$ during their actions $\trans^i, \trans^j$.
For two agent $i$ and $j$, we always detect and resolve the earliest conflict between them.
We classify all conflicts to be resolved into three types (Fig.~\ref{fig:conflict-type}):
\begin{itemize}
	\item \inin: $\cftij_{I}$, where $v=\Tv{\trans^i}=\Tv{\trans^j}$;
	\item \outin: $\cftij_{O}$, where $v=\Tv{\trans^i}=\Fv{\trans^j}$;
	\item \waitin: $\cftij_{W}$, where $v=\Tv{\trans^i}=\Fv{\trans^j}=\Tv{\trans^j}$.
\end{itemize}
While there are nine possible combinations of two agents' actions $\{\text{\movein,\moveout,\wait{}}\}\times\{\text{\movein,\moveout,\wait{}}\}$, we only need to consider the combinations that involve \movein.
Since for any other conflicts $\{\text{\moveout,\wait{}}\}\times\{\text{\moveout,\wait{}}\}$,
an \movein must be involved prior to \wait or \moveout and cause a conflict at the same vertex due to DO.
From now on, for any conflict between $i$ and $j$, let $i$ be the agent who performs the \movein action, and $j$ may or may not perform \movein.

\begin{figure}[tb]
    \centering
    \includegraphics[width=0.8\linewidth]{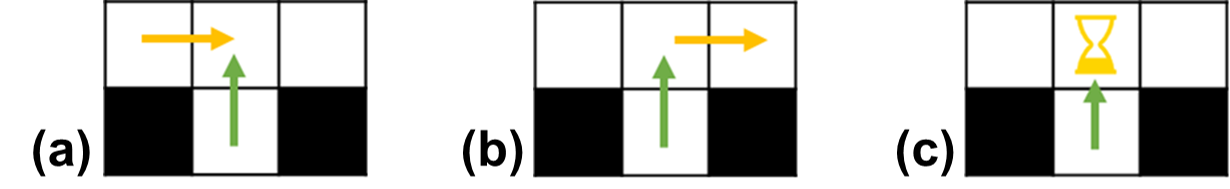}
    \caption{Three Conflict Types. (a): \inin; (b): \outin; (c): \waitin}
    \label{fig:conflict-type}
\end{figure}

\subsection{Constraint and Low-level Planner}\label{sec:cstr-sipp}
Safe Interval Path Planning (\abbrSIPP) ~\cite{phillips2011sipp} is often used as the low-level planner in CBS. 
It constructs a search space with states defined by their vertex and safe time interval, resulting in a graph that generally only has a few states per vertex.
\abbrSIPP is more efficient than A* in the presence of wait durations and finds an optimal path that avoids any unsafe time intervals.
We adapt \abbrSIPP to \abbrMAPFAA setting by using the following constraints for an agent $i$,
let $A^i=((v^i_1, t^i_1), (v^i_2, t^i_2))$:
\begin{itemize}
	\item Motion Constraint (MC) $\cstrm{i}{u}{v}{[l, r)}$: 
		forbids all move actions $\trans^i$ where $v^i_1=u,v^i_2=v$ and $t^i_1\in[l,r)$;
	\item Wait Constraint (WC) $\cstrw{i}{v}{[l, r)}$:
		forbids all wait actions $\trans^i$ where $v^i_1=v$ and $\intv(\trans^i, v)\cap [l,r)\neq\emptyset$
	\item Occupancy Constraint (OC) $\cstro{i}{v}{t}$:
		forbids all actions (\movein, \moveout and \wait) $A^i$ where $t\in\intv(\trans^i,v)$;
\end{itemize}
Fig.~\ref{fig:lowlevelsipp} illustrates how the constraints affect the search space of the low-level planner. 


\begin{figure}[tb]
    \centering
    \includegraphics[width=0.9\linewidth]{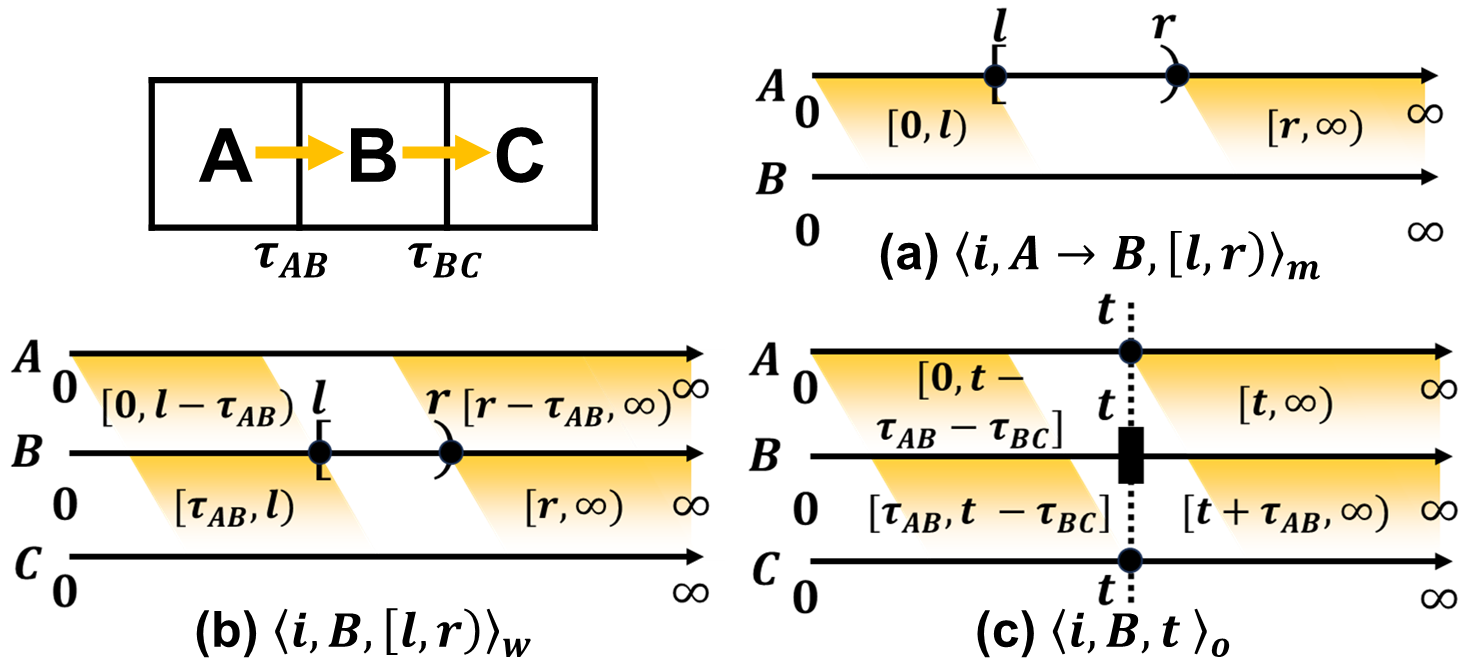}
    \caption{Changes in the search space of low-level after adding constraints. Duration time from $A$ to $B$ and from $B$ to $C$ are abbreviated as $\tau_{AB}$ and  $\tau_{BC}$.
    (a) MC $\cstrm{i}{A}{B}{[l, r)}$: before time $l$, the interval at which moving from $A$ to $B$ can be started is $[0,l)$; after time $r$, the interval is $[r,\infty)$. 
    (b) WC $\cstrw{i}{B}{[l, r)}$: before time $l$, the interval at which an \movein in $B$ can be started is $[0,l-\tau_{AB})$ and the interval at which an \moveout in $B$ can be started is $[\tau_{AB},l)$; after time $r$, the interval about \movein in $B$ is $[r-\tau_{AB},\infty)$ and the interval about \moveout in $B$ is $[r,\infty)$; the safe interval of $B$ is $[0,l)$ and $[r,\infty)$.
    (c) OC $\cstro{i}{B}{t}$: before time $t$, the interval at which an \movein in $B$ can be started is $[0,t-\tau_{AB}-\tau_{BC}]$ (by Def.~\ref{def:DO}, $t-\tau_{AB}-\tau_{BC}$ is included) and the interval at which an \moveout in $B$ can be started is $[\tau_{AB},t-\tau_{BC}]$; after time $t$, the interval about \movein in $B$ is $[t,\infty)$ and the interval about \moveout in $B$ is $[t+\tau_{AB},\infty)$; the safe interval of $B$ is $[0,t-\tau_{BC})$ and $[t+\tau_{AB},\infty)$. 
    }
    \label{fig:lowlevelsipp}
\end{figure}

In the low-level planner of \abbrCBS~\cite{sharon2015conflict}, tie-breaking is a useful method to find conflict-free solutions faster.
It can find an optimal path for agent $i$ that satisfies the constraints added by high-level and has fewer conflicts with the planned paths of other agents.
\abbrSIPPS \cite{li2022mapf} extends \abbrSIPP to consider other paths as soft constraints and breaks ties by preferring the path that has fewer soft conflicts (i.e., conflicts with soft constraints).
But \abbrSIPPS ignores the cases where an agent may encounter multiple soft conflicts if it waits within a safe interval.

To consider these cases, we propose \abbrSIPPS with Waiting Conflict (\abbrSIPPSWC) to consider the soft conflicts when waiting.
Specifically, a state $s=(v,t,t_h,c_v^w)$ in \abbrSIPPSWC consists of a vertex $v$, an arrival time $t$, an end time of the corresponding safe interval $t_h$, and an integer number $c_v^w$ indicating the number of soft conflicts if waiting at $v$ from $t$ to $t_h$.
As shown in Fig. \ref{fig:sipps_aa}, $c_v^w$ can help distinguish between a path that moves from $v$ to $v'$ and then waits at $v'$ and another path that waits at $v$ and then moves to $v'$.
To prune states, if two states $s_1$ and $s_2$ have the same $v$, $t_h$ and $c_v^w$, then $s_1$ dominants $s_2$ if the arrival time $s_1.t \leq s_2.t$ and the number of soft conflicts along the path from the start vertex to $v$ $c(s_1) \leq c(s_2)$.
Our low-level planner adapts \abbrSIPPSWC to continuous time and the three types of constraints as aforementioned (Fig.~\ref{fig:sipps_aa}).

\begin{figure}[tb]
    \centering
    \includegraphics[width=0.8\linewidth]{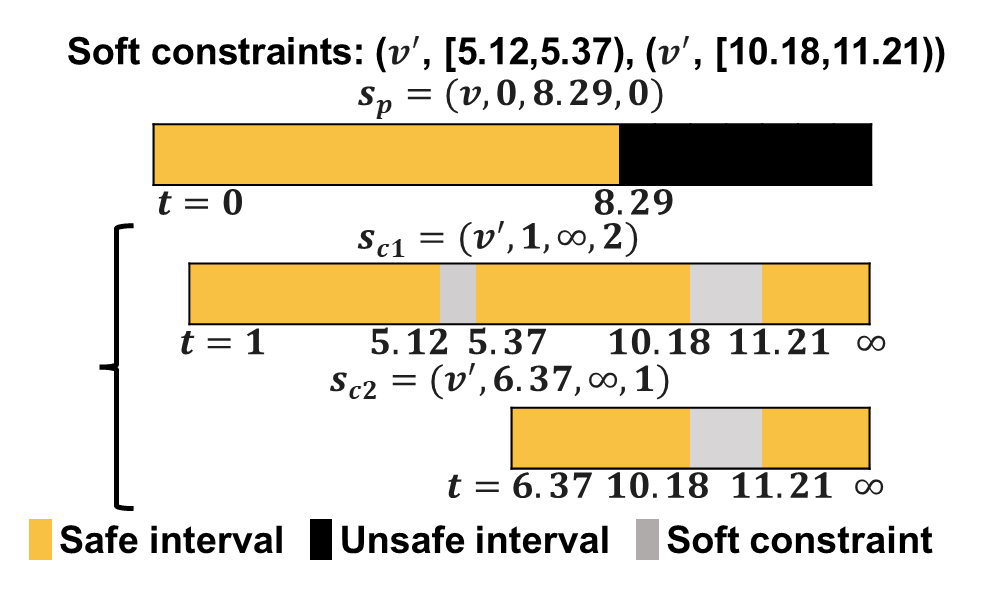}
    \caption{ Expanding states in \abbrSIPPSWC. The parent state $s_p$ at vertex $v$ with safe interval $[0,8.29)$ can get two child states $s_{c1}$ and $s_{c2}$ at vertex $v'$. State $s_{c1}$ has safe interval $[1,\infty)$ (starts moving at $t=0$, arrives at $v'$ at $t=1$) and $s_{c1}.c^w_v = 2$. State $s_{c2}$ has safe interval $[6.37,\infty)$ (waits at $v$, starts moving at $t=5.37$, arrives at $v'$ at $t=6.37$) and $s_{c2}.c^w_v$ = 1}
    \label{fig:sipps_aa}
\end{figure}

\subsection{Constraints on Single Action (\abbrSMWTP)}\label{sec:smwtp}
Let $\trans^i=((v^i_1,t^i_1),(v^i_2,t^i_2))$ and $\trans^j=((v^j_1, t^j_1),(v^j_2,t^j_2))$.
In an \inin conflict $\cftij_I$, if we permit $j$'s current action,
then $i$ cannot perform its action during the time interval $[l^i=t^i_1, r^i=t^j_2)$.
Here, setting $l^i=t^i_1$ eliminates the current action of $i$, as the conflict has been detected. 
Let $r^i=t^j_2$, since $j$ occupies $v$ in $(t^j_1, t^j_2]$ and $i$ occupies $v$ in $(t^j_2, t^j_2+\intv(\trans^i)]$ 
(Def.~\ref{def:DO}).
Similar reasoning applies to an \outin conflict.
Constraints for both \inin and \outin conflicts are denoted as Eq.~\ref{eq:cstrm}.

In a \waitin conflict $\cftij_W$, we simply add constraints to forbid agents to occupy $v$ at a time point $t_r=\min(t^i_2, t^j_2)$.
All constraints are denoted as Eq.~\ref{eq:cstrw}.
\begin{equation}\label{eq:cstrm}
    \begin{cases}
			\cstrm{i}{v^i_1}{v^i_2}{[t^i_1,t^j_2)}\\
			\cstrm{j}{v^j_1}{v^j_2}{[t^j_1,t^i_2)}\\
    \end{cases}
\end{equation}
\begin{equation}\label{eq:cstrw}
   \begin{cases}
		\cstro{i}{v}{t_r} \\
		\cstro{j}{v}{t_r} \\
   \end{cases} 
\end{equation}

\begin{remark}\label{csa-md}
The difference between \abbrSMWTP and \abbrCCBS lies in the constraints on \wait actions. \abbrCCBS adds a constraint to a specific \wait action, while \abbrSMWTP adds a constraint to a vertex $v$, which forbids all actions that occupy $v$.
The constraints in \abbrSMWTP avoid the problem about infinitely many wait actions.
\end{remark}

\subsection{Constraints on Multiple Actions (\abbrMMWTI)}\label{sec:mmwti}
For a specific vertex $v$, $e_1=(u_1,v) \in E$ and $e_2=(u_2,v) \in E$, the travel time for agent $i$, $\tau^i(u_1,v)$, can be different from $\tau^i(u_2,v)$.
Let $\durin^i(v),\durout^i(v)$ denote the minimum travel time of $i$'s \movein and \moveout at $v$, i.e., $\durin^i(v) = \min_{e=(u,v) \in E} (\tau^i(u,v))$ and $\durout^i(v) = \min_{e=(v,u) \in E} (\tau^i(v,u))$.
If agent $i$ starts to perform an \movein in $v$ at $t$, it occupies $v$ at least $(t, t+\durin^i(v)+\durout^i(v))$.
If agent $i$ starts to perform a \wait or \moveout in $v$ at $t$, it occupies $v$ at least $(t-\durin^i(v), t+\durout^i(v))$. 
\abbrMMWTI uses such DO to propagate constraints to multiple actions and time intervals.

\paragraph{Resolve IN($j$)-IN($i$)}
To permit $j$'s action, we add a constraint on $i$ to forbid all \movein actions starting within the time interval $[t^i_1, t^j_1+\durin^j+\durout^j)$, 
where $t^j_1+\durin^j+\durout^j$ represents the earliest time point for $j$ to leave $v$ if starting to perform an \movein in $v$ at $t^j_1$.
The same strategy is applicable for $i$:
\begin{equation}
    \begin{cases}\label{eq:cstr-inin}
			\cstrm{i}{*}{v^i_2}{[t^i_1, t^j_1+\durin^j+\durout^j)} \\
			\cstrm{j}{*}{v^j_2}{[t^j_1, t^i_1+\durin^i+\durout^i)} \\
    \end{cases} 
\end{equation}

\paragraph{Resolve OUT($j$)-IN($i$)}
To permit $j$'s action, we add a constraint on $i$ to forbid all \movein actions starting within the time interval $[t^i_1, t^j_1+\durout^j)$. To permit $i$'s action, multiple constraints on $j$ are added to forbid all \moveout actions 
starting within the time interval $T=[t^j_1, t^i_1+\durin^{i}+\durout^{i}+\durin^{j})$, 
and forbid \wait actions that occupy $v$ at any time point in $T$.
\begin{equation}\label{eq:cstr-outin-i}
	\cstrm{i}{*}{v}{[t^i_1, t^j_1+\durout^j)}
\end{equation}
\begin{equation}\label{eq:cstrs-outin-j}
    \begin{array}{ccc}
			\cstrw{j}{v}{[t^j_1, t^i_1+\durin^{i}+\durout^{i}+\durin^{j})} & \cup  \\
			\cstrm{j}{v}{*}{[t^j_1, t^i_1+\durin^{i}+\durout^{i}+\durin^{j})} &
    \end{array}
\end{equation}

Here, $[t^i_1, t^j_1+\durout^j)$ allows $i$ to either start the \movein earlier than $t^i_1$, or after the earliest time when $j$'s \moveout finishes ($\geq t^j_1+\durout^j$). $t^i_1+\durin^{i}+\durout^{i}$ is the earliest time point that $i$ finishes an \moveout action to leave $v$, and
$t^i_1+\durin^{i}+\durout^{i}+\durin^{j}$ is the earliest time point that $j$ finishes the \movein to reach $v$, so that $j$ can start to perform a \wait or \moveout afterwards.

\paragraph{Resolve WAIT($j$)-IN($i$)}
Due to the existence of \wait action and the variable duration of \wait action, this case requires extra care on the time interval of constraints. 
We first show the form of constraints, 
then we discuss how to properly set the time interval of constraints.

Overall, we add a constraint on $i$ to forbid all \movein actions starting within the time interval $T^i=[l^i, r^i)$,
or to forbid $j$'s \wait to occupy $v$ for at any time point in $T^j=[l^j, r^j)$,
where $T^i,T^j$ depend on the duration of $j$'s \wait action.

Similar to Eq. \ref{eq:cstrs-outin-j}, to permit $i$'s action, $t^i_1+\durin^{i}+\durout^{i}+\durin^{j}$ is a critical time point. 
$t^i_1+\durin^{i}+\durout^{i}$ is the earliest time point at which 
agent $i$ leaves the conflict vertex $v$.
Then agent $j$ can start the \movein action, arriving at $t^i_1+\durin^{i}+\durout^{i}+\durin^{j}$,
and performs a \wait.
Therefore, $t^i_1+\durin^{i}+\durout^{i}+\durin^{j}$ is the earliest time point that $j$ can start to perform a \wait without conflicting with $i$.
We set $r^j=t^i_1+\durin^{i}+\durout^{i}+\durin^{j}$.
To permit $j$'s action, $t^j_2+\durout^{j}$ is a critical time point. $t^j_2+\durout^{j}$ is the earliest time point at which agent $j$ leaves the conflict vertex $v$. Therefore, $t^j_2+\durout^{j}$ is the earliest time point that $i$ can start to perform a \movein without conflicting with $j$.
$r^i$ should be equal to $t^j_2+\durout^{j}$.
As illustrated in Fig.~\ref{fig:lowlevelsipp}, we can set $l^i = t^i_1$ and $l^j \in [t^j_1,t^j_2]$ to make the constraints affect the relevant actions.
To resolve conflicts efficiently without eliminating potential solutions and impacting completeness, we set $l^j = t^j_2$.

Now, we have $l^j = t^j_2$, $r^j=t^i_1+\durin^{i}+\durout^{i}+\durin^{j}$, $l^i = t^i_1$ and $r^i=t^j_2+\durout^{j}$, where $t^i_1$ is the start time of the \movein action and $t^j_2$ is the end time of the \wait action.
However, due to the uncertain duration of \wait action, the end time point $t^j_2$ may be very large, which can result in $l^j > r^j$ and thus invalidate the interval $T^j$. When $t^j_2$ is large, special care is needed.

If $t^j_2 \geq t^i_1+\durin^{i}+\durout^{i}+\durin^{j}$, $j$ performs a long \wait. 
The long wait action is divided into two consecutive short \wait actions.
The start and end times of the first short \wait action are ${t^{j}_{1}}' = t^j_1$ and ${t^{j}_{2}}' = t^i_1+\durin^{i}+\durout^{i}$ respectively.
The start and end times of the second short \wait action are ${t^{j}_{1}}'' =  t^i_1+\durin^{i}+\durout^{i}$ and ${t^{j}_{2}}'' = t^j_2$ respectively. 
We first resolve the conflict between $j$'s first short \wait action and $i$'s \movein action, using the above idea. 
Therefore,  we have $l^j = {t^{j}_{2}}'$, $r^j=t^i_1+\durin^{i}+\durout^{i}+\durin^{j}$, $l^i = t^i_1$ and $r^i={t^{j}_{2}}'+\durout^{j}$, where $t^i_1$ is the start time of the \movein action and ${t^{j}_{2}}'$ is the end time of the first short \wait action.

\begin{figure}[tb]
    \centering
    \includegraphics[width=0.9\linewidth]{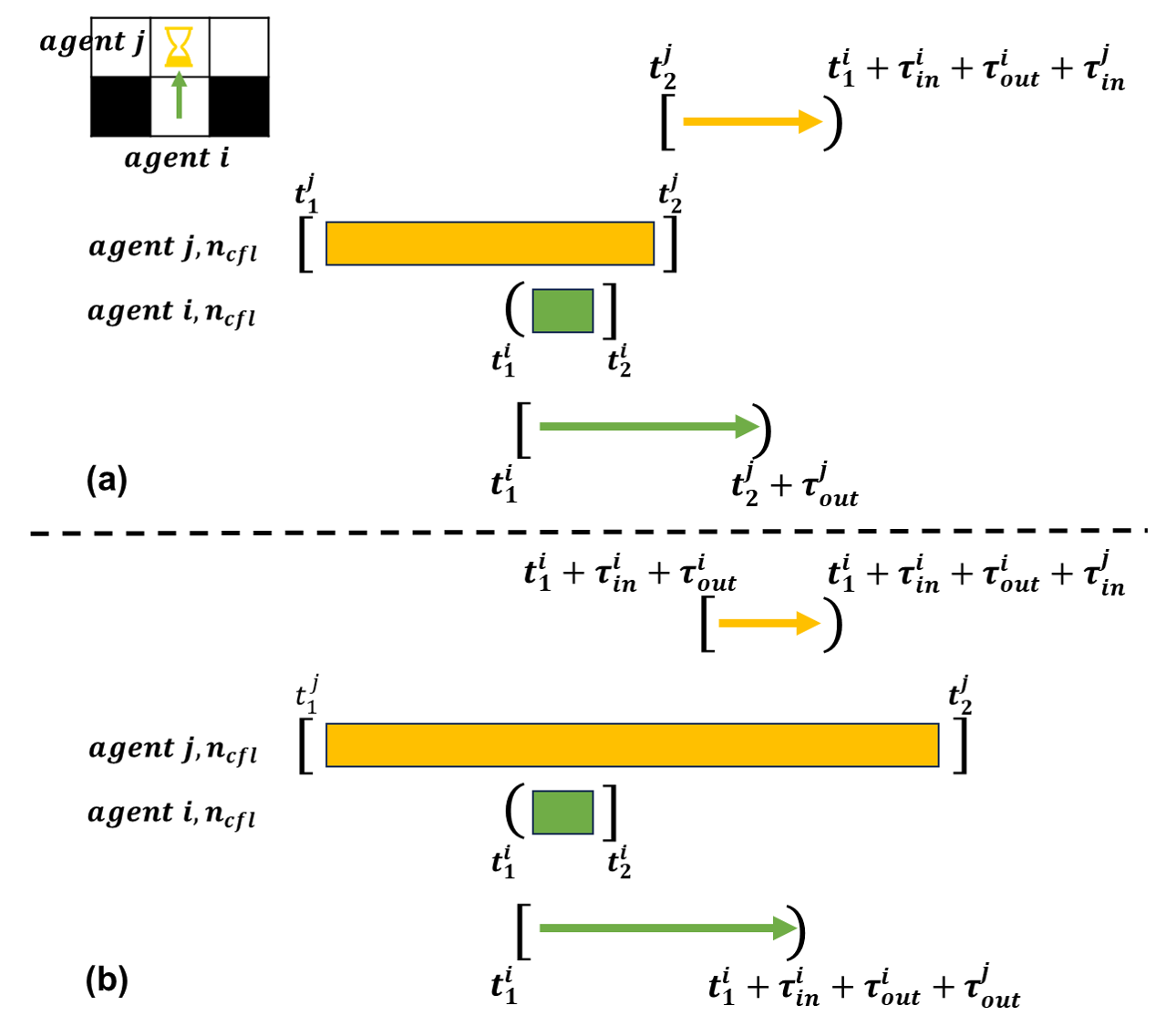}
    \caption{Resolve WAIT($j$)-IN($i$). 
    Rectangular strips are conflicts.
    Arrows are corresponding constraints.
    (a) Case 1:	$t^j_2<r^j$.
    (b) Case 2: $t^j_2\geq r^j$.
    }
    \label{fig:waitin}
\end{figure}
 
In summary, there are two cases:


Case 1 (Fig.~\ref{fig:waitin} (a)): $t^j_2<r^j$ 
\begin{equation}\label{eq:waitin-1}
   \begin{cases}
			\cstrm{i}{*}{v}{[t^i_1,t^j_2+\durout^{j})} \\
			\cstrw{j}{v}{[t^j_2,t^i_1+\durin^{i}+\durout^{i}+\durin^{j})}
   \end{cases} 
\end{equation}

Case 2 (Fig.~\ref{fig:waitin} (b)): $t^j_2\geq r^j$ 
		\begin{equation}\label{eq:waitin-2}
			\begin{cases}
			\cstrm{i}{*}{v}{[t^i_1, t^i_1+\durin^{i}+\durout^{i}+\durout^{j})} \\
			\cstrw{j}{v}{[t^i_1+\durin^{i}+\durout^{i}, t^i_1+\durin^{i}+\durout^{i}+\durin^{j})}
			\end{cases}
		\end{equation}
        
Note that conflict resolution in Case 2 needs multiple iterations to permit $j$'s long \wait action. In the first iteration, after adding the constraint of case 2, the start time of $i$'s \movein action is delayed to permit $j$'s first short \wait action.
Then in the second iteration, we have a conflict between $j$'s second short \wait action and $i$'s delayed \movein action. After a finite number of iterations, case 2 will return to case 1, and finally the entire long \wait action of $j$ is permitted.

\begin{figure}[tb]
    \centering
    \includegraphics[width=0.8\linewidth]{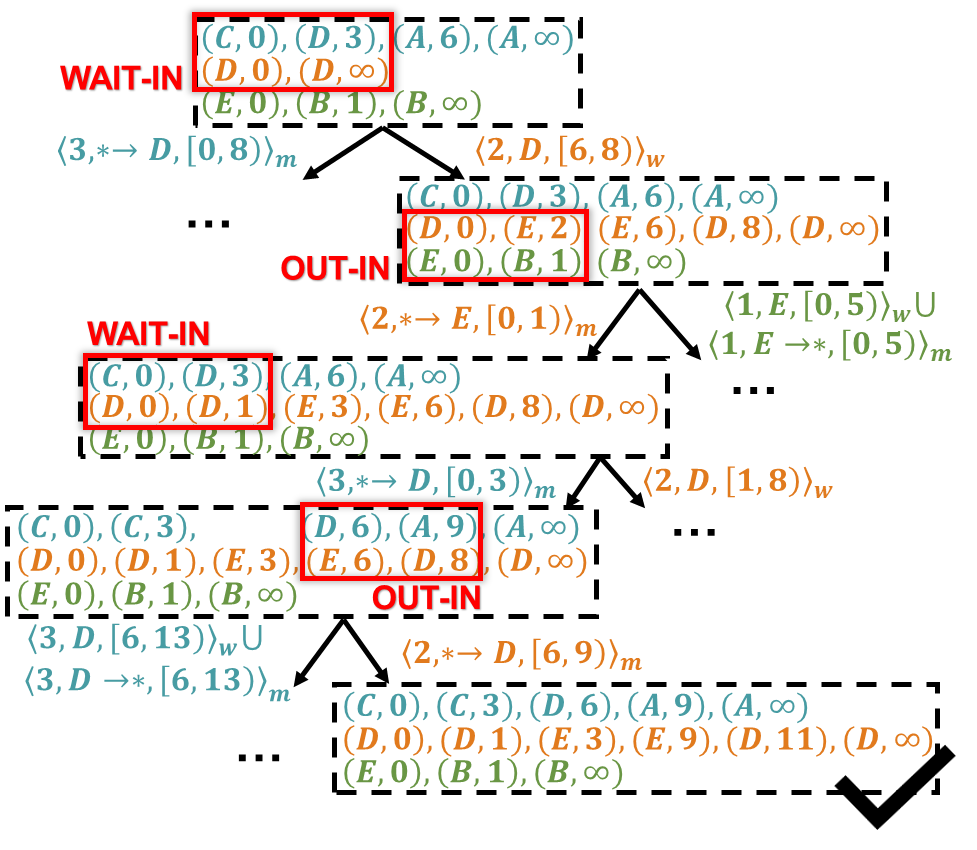}
    \caption{Search process of \abbrMMWTI for Ex.~\ref{ex:toyexample_ccbs}
    }
    \label{fig:toyexample_CMA}
\end{figure}


\begin{example}
Following Ex.~\ref{ex:toyexample_ccbs}, Fig.~\ref{fig:toyexample_CMA} shows the search process of \abbrMMWTI. 
\abbrMMWTI can eventually find an optimal solution.
Four key conflicts need to be resolved.
The first conflict is $\cftij_W$, where  $i = 3$, $j = 2$, $\trans^i=((C,0), (D, 3))$ and $\trans^j=((D,0), (D, \infty))$.
The second conflict is $\cftij_O$, where  $i = 2$, $j = 1$, $\trans^i=((D,0), (E, 2))$ and $\trans^j=((E,0), (B, 1))$.
The third conflict is $\cftij_W$, where  $i = 3$, $j = 2$, $\trans^i=((C,0), (D, 3))$ and $\trans^j=((D,0), (D, 1))$.
The fourth conflict is $\cftij_O$, where  $i = 2$, $j = 3$, $\trans^i=((E,6), (D, 8))$ and $\trans^j=((D,6), (A, 9))$.
\end{example}

\subsection{Discussion on \abbrMMWTI and \abbrSMWTP}
For each constraint generated in \abbrMMWTI and \abbrSMWTP, the end time $r$ of the time interval in that constraint must be no smaller than the start time $l$ of the time interval.
Otherwise, the interval and the constraint are invalid.
It is easy to verify that $r$ is not smaller than $l$ in the constraints of \abbrSMWTP.
But Eq.~\ref{eq:cstr-inin}, Eq.~\ref{eq:cstr-outin-i} and Eq.~\ref{eq:cstrs-outin-j} of \abbrMMWTI may result in $r$ being smaller than $l$ due to the differences in travel time.
\begin{example}
As shown in Fig.~\ref{fig:assumptionExample} (a), the conflict is $\cftij_I$, where  $i = 1$, $j = 2$, $\trans^i=((A,5), (B, 11))$ and $\trans^j=((D,10.5), (B, 11.5))$.
According to Eq.~\ref{eq:cstr-inin}, the constraint added to agent $j$ is $\cstrm{j}{*}{B}{[10.5, 9)}$, which is invalid.

\end{example}

\begin{figure}[tb]
    \centering
    \includegraphics[width=0.8\linewidth]{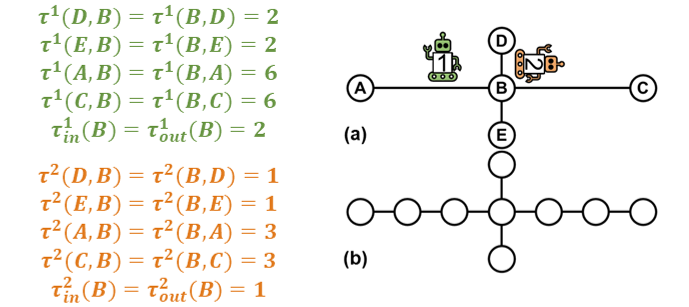}
    \caption{
    (a) An example where \abbrMMWTI generates an invalid constraint.
    (b) By inserting intermediate vertices into the longer edges, the constraint generated by \abbrMMWTI can be valid.
    }
    \label{fig:assumptionExample}
\end{figure}

If the following assumption holds, then $r$ is always greater than $l$ for any generated constraint in \abbrMMWTI.
\begin{assumption}\label{assume:equal_time}
    For each agent $i\in I$, the duration for $i$ to traverse any edge $e\in E$ is the same constant real number. This constant can be different for different agents.
\end{assumption}
In practice, Assumption~\ref{assume:equal_time} can be satisfied by inserting intermediate vertexes into the longer edges (as shown in Fig.~\ref{fig:assumptionExample} (b)) and Keeping agents moving at constant speeds.
The grid world used for path planning in an automated warehouse is a common example.
By Assumption~\ref{assume:equal_time}, all edges linked to $v$ have the same travel time for agent $i$. 
The time intervals in Eq.~\ref{eq:cstr-inin} are changed to $[t^i_1,t^j_2+\tau^j)$ and $[t^j_1,t^i_2+\tau^{i})$ where $\tau^i$, $\tau^j$ denote the travel time of agent $i$, $j$. 
It is easy to verify that $r$ is not smaller than $l$ in these new time intervals.
The same is true for Eq.~\ref{eq:cstr-outin-i} and Eq.~\ref{eq:cstrs-outin-j}.

%% file: analysis_v2.tex
\subsection{Analysis}

In this subsection, we show that CBS-AA is optimal and complete. 
We begin by showing that the constraints introduced by \abbrCBSAA are mutually disjunctive, a property essential for proving optimality. 
Next, we demonstrate that \abbrCBSAA always terminates within a finite number of steps.
Finally, we prove the optimality and completeness of \abbrCBSAA.

\begin{definition}\label{def:MD}
Two constraints are Mutually Disjunctive (MD)~\cite{li2019multi} if a set of conflict-free paths cannot simultaneously violate them.
In other words, if a set of paths simultaneously violates these two constraints, then it must have a conflict.
\end{definition}

\begin{lemma}\label{lemma:csa}
Constraints in \abbrSMWTP (Eq.~\eqref{eq:cstrm}, ~\eqref{eq:cstrw}) are MD.
\end{lemma}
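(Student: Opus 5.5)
We treat the two kinds of constraint pairs produced by \abbrSMWTP separately: the pair of motion constraints in Eq.~\eqref{eq:cstrm} (used for \inin and \outin conflicts), and the pair of occupancy constraints in Eq.~\eqref{eq:cstrw} (used for \waitin conflicts). In both cases we assume that a pair of paths $\pi^i,\pi^j$ violates both constraints of the pair. From Def.~\ref{def:DO} we then exhibit a vertex and a non-empty time interval that both agents occupy, which is a duration conflict. By Def.~\ref{def:MD}, this shows the two constraints are MD.

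\textbf{Occupancy constraints (Eq.~\eqref{eq:cstrw}).} This is the easy case. If $\pi^i$ violates $\cstro{i}{v}{t_r}$, then some action of $i$ has $t_r\in\intv(\trans^i,v)$. Likewise, some action of $j$ has $t_r\in\intv(\trans^j,v)$. So both agents occupy $v$ at time $t_r$. Every path starts at time $0$, so an agent that occupies $v$ at $t_r$ is in the middle of a contiguous \movein/\wait/\moveout sequence at $v$. That sequence covers a closed or half-open interval containing $t_r$. The two occupancy intervals therefore intersect, and by the paper's conflict detection rule ($\intv(\trans^i,v)\cap\intv(\trans^j,v)\neq\emptyset$) this is a conflict. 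If one wants a positive-length overlap, argue it from continuity of occupancy around $t_r$ whenever at least one of the two actions has positive duration at $v$.

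\textbf{Motion constraints (Eq.~\eqref{eq:cstrm}).} Take the \inin case first. Suppose $\pi^i$ contains a move $u\to v$ starting at some $s^i\in[t^i_1,t^j_2)$, and $\pi^j$ contains a move $u'\to v$ starting at some $s^j\in[t^j_1,t^i_2)$. Their durations are the fixed input values $\tau^i(u,v)=t^i_2-t^i_1$ and $\tau^j(u',v)=t^j_2-t^j_1$. Agent $i$ then occupies $v$ on $(s^i,s^i+\tau^i]$, and agent $j$ occupies $v$ on $(s^j,s^j+\tau^j]$. The constraint bounds give $s^i<t^j_2\le s^j+\tau^j$ and $s^j<t^i_2\le s^i+\tau^i$. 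These two strict inequalities show that the half-open intervals $(s^i,s^i+\tau^i]$ and $(s^j,s^j+\tau^j]$ overlap on a set of positive length. That overlap is a DC at $v$.

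The \outin case is handled the same way. Here $j$'s constraint concerns a move out of $v$ starting in $[t^j_1,t^i_2)$. During that move $j$ occupies $v$ on $[s^j,s^j+\tau^j)$, since $v$ is its origin. Agent $i$ enters $v$ over $(s^i,s^i+\tau^i]$ with $s^i<t^j_2=t^j_1+\tau^j$. We compare $s^i<s^j+\tau^j$ and $s^j<s^i+\tau^i$ to conclude the intervals overlap non-trivially.

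The main obstacle is the boundary bookkeeping, i.e.\ mixing open and closed endpoints. The intervals are half-open in different directions for \movein and \moveout, and the constraint intervals $[l,r)$ are themselves half-open. The task is to check that the right endpoint $r=t^j_2$ (respectively $t^i_2$) was chosen exactly so that violation forces strict overlap, rather than the two occupancies merely touching at a single instant. The \waitin case has a related subtlety, where the overlap may be only the single point $t_r$. There I would rely on the paper's convention that any non-empty intersection of occupancy intervals counts as a conflict, as in Sec.~\ref{sec:detect}.
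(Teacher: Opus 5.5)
Your proposal is correct and takes the same route as the paper. For Eq.~\eqref{eq:cstrm}, the key fact is that each constraint's right endpoint is the other agent's arrival time ($t^j_2$, resp.\ $t^i_2$), which is exactly what your inequalities $s^i<t^j_2\le s^j+\tau^j$ and $s^j<t^i_2\le s^i+\tau^i$ use; for Eq.~\eqref{eq:cstrw}, violating both constraints puts both agents on $v$ at time $t_r$, and you spell out the endpoint bookkeeping that the paper's two-line proof leaves implicit.
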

\begin{proof}\label{csa-md}
In Eq~\eqref{eq:cstrm}, the end time $r$ of the time interval added to $i$ ($j$) is $\Tt{\trans^j}$ ($\Tt{\trans^i}$).
Therefore, simultaneously violating these two constraints necessarily leads to a conflict.
In Eq~\eqref{eq:cstrw}, violating both constraints will obviously result in a conflict at time point $t_r$. 
\end{proof}

Assumption~\ref{assume:equal_time} allows us to use $\tau^i,\tau^j$ to denote the travel time of agent $i,j\in I$ for any edge in $G$, which greatly simplifies the notation. This section thus uses this assumption to show the ideas in the proofs.
The proof can be extended to the general case without relying on Assumption~\ref{assume:equal_time} by using $\tau^i_{in}(v),\tau^i_{out}(v)$.
Given $\cftij$, let $\trans^i=((v^i_1,t^i_1),(v^i_2,t^i_2))$ and $\trans^j=((v^j_1, t^j_1),(v^j_2,t^j_2))$.

\begin{lemma}\label{lemma:in-in}
\inin constraints (Eq.~\eqref{eq:cstr-inin}) in \abbrMMWTI are MD.
\end{lemma}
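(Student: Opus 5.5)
We need to show that any set of paths violating both constraints in Eq.~\eqref{eq:cstr-inin} must contain a conflict. Under Assumption~\ref{assume:equal_time}, the two constraints become $\cstrm{i}{*}{v}{[t^i_1, t^j_1+2\tau^j)}$ and $\cstrm{j}{*}{v}{[t^j_1, t^i_1+2\tau^i)}$. Here $v=v^i_2=v^j_2$ is the vertex of the \inin conflict, and $\durin^j+\durout^j=2\tau^j$.

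Suppose a joint path violates both. Then agent $i$ performs some \movein action into $v$ starting at a time $s^i\in[t^i_1,t^j_1+2\tau^j)$, and agent $j$ performs some \movein into $v$ starting at $s^j\in[t^j_1,t^i_1+2\tau^i)$. I would exploit the DO lower bound stated before Eq.~\eqref{eq:cstr-inin}: an agent starting an \movein at $v$ at time $s$ occupies $v$ during at least $(s,s+2\tau)$. It must enter, and then eventually perform an \moveout, unless $v$ is its goal, in which case it stays forever and occupies even longer. So $i$ occupies $v$ on $(s^i,s^i+2\tau^i)$ and $j$ occupies $v$ on $(s^j,s^j+2\tau^j)$. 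It suffices to show these two open intervals intersect, since by Def.~\ref{def:DO} a nonempty overlap is a duration conflict.

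Two open intervals $(a,a+x)$ and $(b,b+y)$ with $x,y>0$ intersect iff $a<b+y$ and $b<a+x$. The bounds give $s^i<t^j_1+2\tau^j$ and $s^j<t^i_1+2\tau^i$. What is actually needed is $s^i<s^j+2\tau^j$ and $s^j<s^i+2\tau^i$. Using the lower bounds $s^j\ge t^j_1$ and $s^i\ge t^i_1$, we get $s^i<t^j_1+2\tau^j\le s^j+2\tau^j$ and $s^j<t^i_1+2\tau^i\le s^i+2\tau^i$. Both hold, so the intervals overlap and the paths conflict. This proves the constraints are MD.

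The main obstacle is justifying the occupancy lower bound carefully. The violating \movein need not be the original action $\trans^i$: it may come from any neighbor $u$, and the agent may \wait at $v$ in between. Under Assumption~\ref{assume:equal_time} every incoming and outgoing move lasts exactly $\tau^i$, and any \wait only lengthens the occupancy. The goal-vertex case gives infinite occupancy, which only helps. I would also note that $\tau^i,\tau^j>0$ is needed for the open intervals to be nonempty. For zero-duration moves I would fall back on the closed-endpoint occupancy from Def.~\ref{def:DO}, and I would mention this degenerate case explicitly rather than grind through it.
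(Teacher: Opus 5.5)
Your proof is correct and follows essentially the same route as the paper's. Both use the duration-occupancy lower bound $(s,s+2\tau)$ for any \movein into $v$ and then show the two occupancy intervals must overlap; the paper argues by contradiction over the two non-overlap cases ($x\ge y+2\tau^j$ or $y\ge x+2\tau^i$), while you check the two overlap inequalities directly, and your extra remarks on arbitrary incoming neighbors, the goal vertex, and zero-duration moves are careful additions the paper does not spell out.
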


\begin{proof}
    Assume agent $i$ performs \movein at time $x$, occupying $v$ over $\intv^i_x=(x, x+2\tau^i)$, and agent $j$ performs \movein at time $y$, occupying $v$ over $\intv^j_y=(y, y+2\tau^j)$.
    If both agents violate the constraint, then
    $x \in [t_1^i, t_2^j+\tau^j)$ and
    $y \in [t_1^j, t_2^i+\tau^i)$.
    We show the intervals must intersect. By contradiction, suppose they do not.
        (i) If $x \ge y+2\tau^j$, then
        $x < t_2^j+\tau^j$ while
        $y+2\tau^j \ge t_1^j+2\tau^j = t_2^j+\tau^j$,
        a contradiction.
        (ii) If $y \ge x+2\tau^i$, then
        $y < t_2^i+\tau^i$ while
        $x+2\tau^i \ge t_1^i+2\tau^i = t_2^i+\tau^i$,
        a contradiction.
    Hence, the intervals intersect, implying a conflict.
    Without Assumption~\ref{assume:equal_time}, replacing $2\tau$ by $\tau_{in}+\tau_{out}$ yields the same conclusion. Therefore, constraints in Eq.~\eqref{eq:cstr-inin} are MD.
\end{proof}

With the same idea, it can be proved that \outin constraints (Eq.~\eqref{eq:cstr-outin-i},~\eqref{eq:cstrs-outin-j}) and \waitin constraints (Eq.~\eqref{eq:waitin-1} and~\eqref{eq:waitin-2}) are MD.

\begin{lemma}\label{lemma:termination}
\abbrCBSAA can terminate within a finite number of steps on a solvable \abbrMAPFAA problem.
\end{lemma}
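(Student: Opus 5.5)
The argument follows the standard CBS termination proof. First I bound the costs involved, then show that only finitely many high-level nodes lie below any cost bound, and finally show that the best-first order on $g$ forces a goal node to be popped after finitely many expansions.

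\textbf{Step 1 (an optimal node always stays in OPEN).} Since the instance is solvable, fix an optimal conflict-free joint path $\pi^*$ with cost $C^*=\sum_i g(\pi^{*i})$. Call a high-level node $P=(\pi,g,\Omega_c)$ \emph{consistent} with $\pi^*$ if $\pi^*$ satisfies every constraint in $\Omega_c$.

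The root is consistent. Now take a consistent node that is popped and has a conflict. Lemma~\ref{lemma:csa} and Lemma~\ref{lemma:in-in}, together with the analogous statements for \outin and \waitin, say that the two generated constraints are MD. Since $\pi^*$ is conflict-free, it cannot violate both constraints, so at least one child is again consistent. Because the low-level planner (SIPP-based) is optimal under its constraints, every consistent node has $g\le C^*$. By induction, OPEN always contains a node with $g\le C^*$.

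\textbf{Step 2 (finitely many nodes with $g\le C^*$).} This is the main obstacle, because time is continuous and a priori constraints could be infinitely fine. I would argue with a discretization.

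Under Assumption~\ref{assume:equal_time}, and also without it since there are finitely many edge durations, every constraint endpoint is built from conflict times by adding integer combinations of the finitely many durations $\tau^i(u,v)$.

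The key point is that wait actions contribute no new free parameters. OCs and WCs forbid whole time windows at a vertex, not individual wait durations, as noted in the Remark after \abbrSMWTP. Hence the low-level planner only departs at times that are endpoints of safe intervals, and those endpoints are themselves sums of durations and earlier constraint endpoints.

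I would show by induction on depth that all times appearing in paths and constraints lie in the set $\mathcal{T}=\{\sum_e n_e\tau_e : n_e\in\mathbb{Z}_{\ge0}\}$. Then I restrict to $\mathcal{T}\cap[0,C^*+\max\tau]$, which is finite because each $\tau_e>0$. (Zero-duration edges would need a separate remark.)

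It follows that there are only finitely many distinct constraints relevant below cost $C^*$. Every branch adds a new constraint, since the child's path violated it, so a branch never repeats a constraint. The number of nodes with $g\le C^*$ is therefore finite.

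\textbf{Step 3 (conclusion).} OPEN is popped in nondecreasing order of $g$, and a node with $g\le C^*$ is always present by Step 1. So only nodes with $g\le C^*$ are ever expanded, and by Step 2 there are finitely many of them.

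For the Case~2 long-wait splitting of \waitin, I would additionally check that each iteration delays $i$'s \movein by at least $\min\tau>0$. This gives only finitely many repetitions before Case~1 applies, and that bound is absorbed into the finite count above.

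Hence after finitely many expansions a conflict-free node is popped and the algorithm returns.
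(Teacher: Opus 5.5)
Your route differs from the paper's. The paper argues over trajectories. Only finitely many move sequences cost less than the bound, so non-termination would stack infinitely many constraints on one sequence, each cutting a non-zero amount of its feasible execution time. That argument never inspects which times the low-level planner outputs. You instead try to put all times on a finite grid $\mathcal{T}\cap[0,C]$. Where that works, it even supplies the uniform positive step that the paper's sketch leaves implicit. Steps~1 and~3 are fine, apart from two small slips. First, it is the \emph{parent's} path that violates the new constraint, which is why $\Omega_c$ strictly grows. Second, endpoints such as $t^i_1+2\tau^i+\tau^j$ need a window like $[0,C^*+3\max\tau]$. The gap is in Step~2, whose grid claim carries the whole proof.

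The sentence ``those endpoints are themselves sums of durations and earlier constraint endpoints'' is false, because the planner also subtracts. A WC $\langle j,v,[l,r)\rangle_w$ makes the earliest admissible \movein into $v$ along $(u,v)$ start at $r-\tau^j(u,v)$ (Fig.~\ref{fig:lowlevelsipp}(b)). That departure time later reappears as a conflict time and a constraint endpoint, so your semigroup $\mathcal{T}$ is not obviously closed.
\begin{itemize}
\item Under Assumption~\ref{assume:equal_time} closure survives. Every WC right endpoint in Eqs.~\eqref{eq:cstrs-outin-j}, \eqref{eq:waitin-1}, \eqref{eq:waitin-2} equals $t^i_1+2\tau^i+\tau^j$, so it contains the constrained agent's own $\tau^j$. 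You must state and use this.
\item Without the assumption, which you claim to cover, closure breaks. The endpoint $r$ contains $\tau_{in}^j(v)$, while the edge actually used may be slower.
\item More seriously, your claim that the planner departs only at safe-interval endpoints fails for \abbrSIPPSWC, the planner of \abbrMMWTIS. Its soft-conflict tie-breaking can return departures $t-\tau^i$ with $t$ a time on another agent's path (cf.\ $s_{c2}$ in Fig.~\ref{fig:sipps_aa}).
\end{itemize}
Once differences appear, you only know that times lie in $\{\sum_e n_e\tau_e : n_e\in\mathbb{Z}\}$. When the durations are rationally independent, this set meets $[0,C^*]$ in an infinite, dense set, so ``finitely many distinct constraints'' no longer follows.

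As written, your argument proves the lemma for \abbrSMWTP, and for \abbrMMWTI under Assumption~\ref{assume:equal_time}, both with plain earliest-departure SIPP. For \abbrCBSAA in general you need a finiteness argument that does not depend on which optimal path the planner returns. The paper's trajectory argument aims at exactly that.
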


\begin{proof}

    Due to space limits, we sketch the main idea, similar to the one in~\cite{combrink2025sound}.
If the method does not terminate, there exists an infinite sequence of high-level nodes whose costs are all below the optimal solution and all contain conflicts.

Let a trajectory $\sigma$ be a finite sequence of move actions with cost $g(\sigma)$ equal to the sum of their durations. If a path $\pi$ contains all actions of $\sigma$ in order, we write $\pi \sim \sigma$. For any finite $c\in\mathbb{R}$, the set $\{\sigma \mid g(\sigma)<c\}$ is finite. Hence, infinitely many constraints must be added to paths mapping to a specific trajectory $\sigma^{\infty}$.

Non-termination requires that, despite infinitely many constraints, there still exists a path $\pi' \sim \sigma^{\infty}$ satisfying all constraints with $g(\pi') < c$. However, each constraint introduced by \abbrCBSAA reduces the feasible execution time of actions in $\sigma^{\infty}$ by a non-zero amount. After finitely many branchings, no such $\pi'$ can exist. Therefore, \abbrCBSAA must terminate in finite steps.
\end{proof}


\begin{theorem}
\label{theorem:OptimalAndComplete}
\abbrCBSAA is optimal and solution complete.
\end{theorem}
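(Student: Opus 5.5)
The argument follows the standard CBS proof template. Lemma~\ref{lemma:csa}, Lemma~\ref{lemma:in-in} and the remark after it supply mutual disjunctiveness, and Lemma~\ref{lemma:termination} supplies termination.

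\textbf{Invariant.} For any high-level node $P$ with constraint set $\Omega_c$, say $P$ \emph{admits} a conflict-free joint path $\pi^*$ if $\pi^*$ satisfies every constraint in $\Omega_c$. The root has $\Omega_c=\emptyset$, so it admits every conflict-free joint path, in particular an optimal one $\pi^*$ (one exists whenever the instance is solvable). When a node $P$ admitting $\pi^*$ is expanded on a conflict $cft$, \procName{GenerateConstraints} returns constraints $\Omega$. These are pairwise MD by Lemma~\ref{lemma:csa} for \abbrSMWTP, and by Lemma~\ref{lemma:in-in} and its \outin/\waitin analogues for \abbrMMWTI. In the \abbrMMWTI \outin and \waitin cases, one branch carries a WC together with an MC on $j$. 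I treat that union as a single branch constraint, violated if either component is violated, and I need to check that MD still holds for this composite. Since $\pi^*$ is conflict-free, MD means it cannot violate the constraints of both branches. Hence at least one child $P'$ admits $\pi^*$. By induction, at every iteration OPEN contains a node admitting $\pi^*$.

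\textbf{Lower bound and optimality.} \procName{LowLevelPlan} is the continuous-time \abbrSIPPSWC. I will argue that it returns, for each agent, a minimum-cost path among those satisfying that agent's constraints, with tie-breaking affecting only which optimal path is chosen. Its safe intervals are computed exactly from MC/WC/OC as in Fig.~\ref{fig:lowlevelsipp}, and the arrival time in each safe interval is earliest. So for any node $P$ admitting $\pi^*$, $g(P)\le \sum_i g(\pi^{*i}) = C^*$. OPEN is popped in order of minimal $g$ and always contains such a node, so every popped node has $g\le C^*$. The algorithm returns only when the popped node has no conflict, and that joint path is then feasible with cost $\le C^*$, hence optimal.

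\textbf{Completeness.} If the instance is solvable, Lemma~\ref{lemma:termination} shows that only finitely many nodes with $g\le C^*$ are ever expanded. Because OPEN never empties while the invariant holds, the search cannot return failure, so it must eventually pop a conflict-free node. By the previous step that node is optimal.

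\textbf{Main obstacle.} The step most likely to be delicate is the lower-bound claim for the low level in continuous time. I must show that the SIPP state space with real-valued safe-interval endpoints is finite per node, and that pruning by dominance on $(v,t_h,c^w_v)$ never discards the cost-optimal constrained path. I would argue this from finiteness of the constraint set. Finitely many constraints give finitely many interval endpoints per vertex, so each vertex has finitely many safe intervals. Within an interval, earliest arrival dominates: any path arriving later can wait and reproduce the same continuation, because WC/OC only forbid occupancy at specific times already excluded from the interval.
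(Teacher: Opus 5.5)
Your proposal is correct and takes essentially the same route as the paper's proof. Mutual disjunctiveness (Lemmas~\ref{lemma:csa} and~\ref{lemma:in-in}, plus their \outin/\waitin analogues) keeps an optimal solution admitted by some node in OPEN, best-first expansion on $g$ then gives optimality, and Lemma~\ref{lemma:termination} gives completeness. You are more explicit than the paper, which leaves implicit the low-level optimality needed for $g(P)\le C^*$ and only asserts the \outin/\waitin MD cases, including the composite WC$\cup$MC branch you flag; under Assumption~\ref{assume:equal_time} that composite check goes through exactly as in Lemma~\ref{lemma:in-in}.
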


\begin{proof}
    Because all constraints added by \abbrCBSAA are MD (Lem.~\ref{lemma:csa} and Lem.~\ref{lemma:in-in}), all solutions are reachable from the root of search tree.
    And \abbrCBSAA always expands the high-level node with minimum objective value.
    Thus, \abbrCBSAA is optimal. 
    If a feasible solution exists, \abbrCBSAA is solution complete because it can terminate within a finite number of steps (Lem.~\ref{lemma:termination}).
\end{proof}

%% file: result.tex
\begin{figure*}[htbp] 
    \centering
    \includegraphics[width=\textwidth]{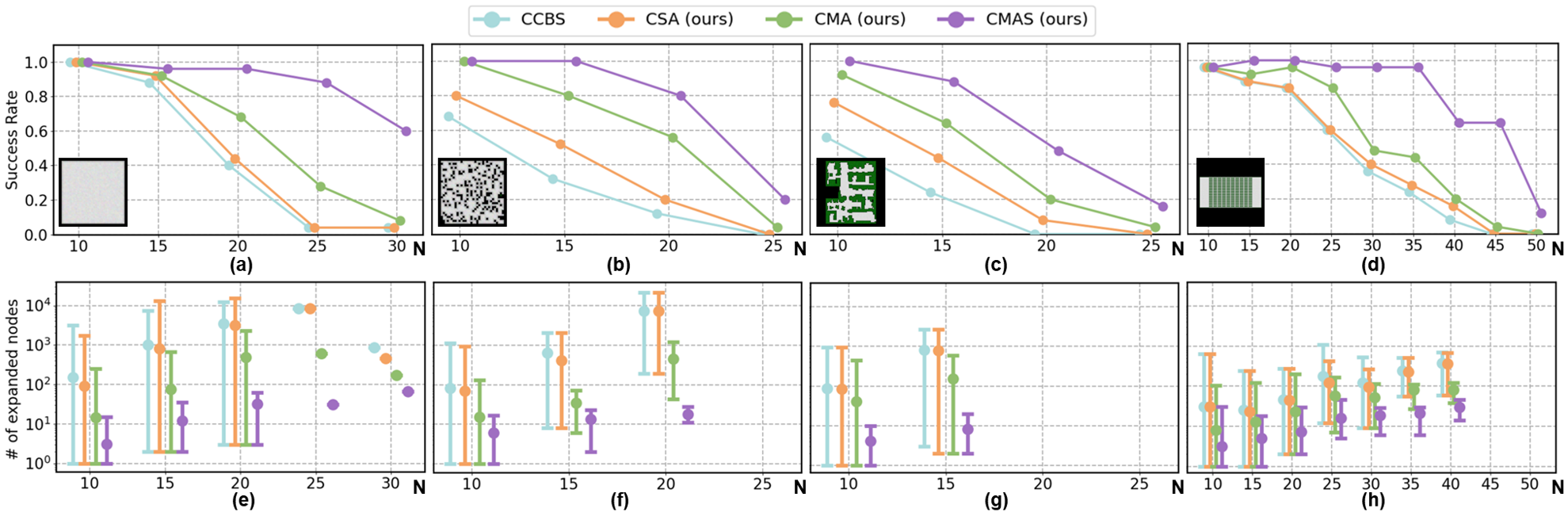} 
    \caption{(a)-(d): Success rates of the algorithms. $N$ is the number of agents.
    (e)-(h): Min., Avg., and Max. number of high-level nodes expanded by the algorithms, which only counts the cases where all four methods succeed. $N$ is the number of agents.
    }
    \label{fig:resultall}
\end{figure*}



In the previous section, we proposed \abbrSMWTP to correct the errors of \abbrCCBS, and further proposed \abbrMMWTI to resolve conflicts efficiently. In order to find conflict-free solutions faster, we proposed a new low-level planner, \abbrSIPPSWC.
In this section, we compare our \abbrSMWTP, \abbrMMWTI and \abbrMMWTIS (\abbrMMWTI with \abbrSIPPSWC) with \abbrCCBS~\cite{ANDREYCHUK2022103662}, to show the gradual improvement of the methods.
Among them, \abbrSMWTP and \abbrMMWTI use \abbrSIPP at the low-level, \abbrMMWTIS uses \abbrSIPPSWC at the low-level.
Besides, to verify the optimality, we compare \abbrMMWTIS with \abbrLSS~\cite{ren2021loosely}.



We use four maps of different sizes from an online data set~\cite{stern2019multi}: ``empty-32-32'', ``random-32-32-20'', ``den312d'' and ``warehouse-10-20-10-2-2''.
We test the algorithms by varying the number of agents $N$ from 10 to 50.
For testing purposes, we consider that the edges in each map have fixed-length edges of a unit, and each agent has a random speed between 1 and 20.
In other words, it takes the same amount of time to traverse any edge for the same agent, and it takes different amount of time to traverse the same edge for different agents. 
When agents reach their goals, they will stay there permanently.
The runtime limit of each instance is 30 seconds.
We implement all algorithms in C++ and run all tests on a computer with an Intel Core i7-11800H CPU.

\subsection{Comparison with \abbrCCBS}
\paragraph{Success Rates} Fig.~\ref{fig:resultall} (a)-(d) show the success rates of the methods.
We observe that \abbrSMWTP has a slightly higher success rate than \abbrCCBS, mainly due to the changes in the wait action.
Different from \abbrCCBS, the constraint added by \abbrSMWTP for wait actions (Fig.~\ref{fig:lowlevelsipp} (c)) may affect all of \movein, \moveout, \wait actions, and therefore resolve conflicts more efficiently and improve the success rates a bit.
Since \abbrCCBS's constraints on wait actions are not mutually disjunctive, it is possible that \abbrCCBS overlooks feasible solutions and fails within the time limit, resulting in a lower success rate.
Additionally, the success rates of \abbrMMWTI are obviously higher than \abbrSMWTP and \abbrCCBS, especially when the number of agents increases.
The reason is that, \abbrMMWTI add constraints that affect multiple actions and add constraints at a time interval for \wait actions, which makes \abbrMMWTI resolve conflicts more efficiently than \abbrSMWTP and \abbrCCBS.
Finally, \abbrSIPPSWC can help find an optimal path and avoid collisions with other agents as much as possible, especially for cases involving waiting at a safe interval. 
With the help of \abbrSIPPSWC, the success rates of \abbrMMWTIS are further improved and can handle up to 50 agents in the warehouse map, which is otherwise hard to achieve by the other three algorithms.



\paragraph{Number of Expansions} Fig.~\ref{fig:resultall} (e)-(f) shows the number of high-level nodes expanded.
\abbrMMWTI has the least number of expansions, which means it can resolve more conflicts with one branching, which reduces the number of expansions to find a solution. 
Compared with \abbrSMWTP, \abbrMMWTI reduces the number of expansions by up to 90\%: When the number of agents is 25 in the empty map, the average number of expansions in \abbrSMWTP is 8286 while the one for \abbrMMWTI is 617.
By considering paths of other agents in the low-level planner, \abbrMMWTIS can find an optimal solution with even fewer expansions, which thus enhances the success rates.





\begin{figure}[tb]
    \centering
    \includegraphics[width=0.9\linewidth]{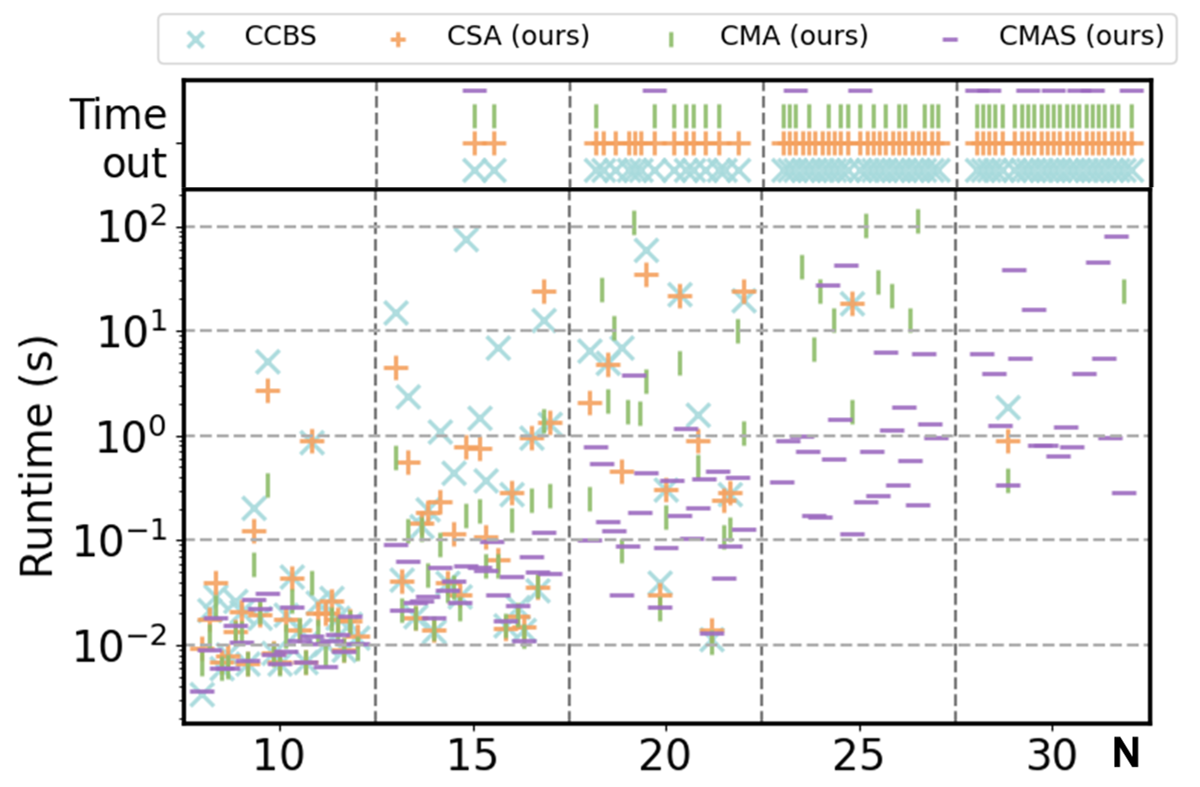}
    \caption{Runtime in empty $32\times32$. $N$ is the number of agents.
    }
    \label{fig:runtime}
\end{figure}

\paragraph{Runtime} We further extend the runtime limit to 120 seconds and analyze the runtime of different algorithms in empty $32\times32$. As shown in Fig.~\ref{fig:runtime}, when the number of agents $N \geq 25$, the success rates of \abbrCCBS and \abbrSMWTP are low, while the success rate of \abbrMMWTI drops rapidly. However, the success rate of \abbrMMWTIS remains above 60\%. When the number of agents $N = 30$, \abbrMMWTIS can find solutions in 68\% of instances within 60 seconds.

\paragraph{\abbrSIPP v.s. \abbrSIPPSWC} We compare the average runtime per call of \abbrSIPP in \abbrMMWTI with that of \abbrSIPPSWC in \abbrMMWTIS in empty $32\times32$. As shown in Table.~\ref{tab:lowlevel}, the runtime of \abbrSIPPSWC increases with the number of agents. The increase in the number of agents leads to an increase in the number of soft constraints contained in \abbrSIPPSWC. Multiple child nodes can be generated for the same safe interval (depending on the number of soft constraints involved), thus increasing the number of nodes to explore. However, due to the consideration of other agents' paths, \abbrMMWTIS can find conflict-free solutions faster.



\begin{table}[tb]
\centering
\caption{Comparison of \abbrSIPP with \abbrSIPPSWC in terms of average runtime per call (ms).} 
\label{tab:lowlevel} 
\small 
\begin{tabular}{|c|c|c|c|c|c|}
\hline
N & 10 & 15 & 20 & 25 & 30 \\ \hline
\abbrSIPP & 0.803 & 0.930 & 0.902 & 0.999 & 1.03 \\ \hline 
\abbrSIPPSWC & 0.845 & 1.20 & 1.37 & 1.62 & 1.95 \\ \hline 
\end{tabular}

\end{table}

\subsection{Comparison with \abbrLSS}

\begin{table}[tb]
\centering
\caption{Comparison of \abbrMMWTIS with \abbrLSS in terms of Success Rate (SR) and Avg. Run Time (RT) under different Time Limit (TL). 
The data are shown in format (\abbrMMWTIS / \abbrLSS under $TL=30s$ / \abbrLSS under $TL=120s$).} 
\label{tab:lss} 
\resizebox{\linewidth}{!}{
\begin{tabular}{|c|c|c|c|c|c|}
\hline
N & 2 & 4 & 6 & 8  \\ \hline
SR & 1.0/1.0/1.0 & 1.0/0.96/1.0 & 1.0/0.2/0.4 & 1.0/0.0/0.0\\ \hline 
RT (s) & 0.009/0.895/0.895 & 0.002/1.52/4.44 & 0.082/19.0/49.6 & -/-  \\ \hline 
\end{tabular}
}
\end{table}

\abbrLSS~\cite{ren2021loosely} is an A*-based \abbrMAPF planner and can find an optimal solution in \abbrMAPFAA.
We compare \abbrMMWTIS with it in order to verify the optimality of \abbrMMWTIS.
The experimental setup is the same as above. 
We fix the map to random-32-32-20 and the number of agents is $N = {2,4,6,8}$.
And considering that \abbrLSS needs a long time to search for an optimal solution, we add an additional experiment to extend the runtime limit to 120 seconds.
As shown in Table.~\ref{tab:lss}, when the number of agents increases to more than 4, the success rate of \abbrLSS begins to decline.
When the number of agents reaches 8, the success rate of \abbrLSS is 0\%.
However, \abbrMMWTIS can always maintain a 100\% success rate, and handle more agents than \abbrLSS.
The cost of the solution found by \abbrMMWTIS is the same as \abbrLSS, while \abbrMMWTIS can find the solution in less run time.
The results show that \abbrMMWTIS can find the same optimal solution as \abbrLSS in a shorter time.

%% file: conclude.tex




This paper focuses on \abbrMAPFAA, and develops a new exact algorithm \abbrCBSAA for \abbrMAPFAA with solution optimality guarantees, based on the popular CBS framework.
\abbrCBSAA introduces new conflict resolution techniques for agents with asynchronous actions, which improves the runtime efficiency of the algorithm.
Experimental results demonstrate the advantages of our new approaches in different settings against several baseline methods.

For future work, one can also consider speed and uncertainty in \abbrMAPFAA or combine \abbrMAPFAA with target allocation and sequencing~\cite{ren23cbssTRO,2025_IROS_MCPFTT_XuemianWu}.

%% file: sample.bib
@inproceedings{2025_IROS_MCPFTT_XuemianWu,
  author = {Wu, Xuemian and Ren, Zhongqiang},
  booktitle = {2025 IEEE/RSJ International Conference on Intelligent Robots and Systems (IROS)},
  title = {Multi-Agent Combinatorial Path Finding for Tractor-Trailers in Occupancy Grids},
  year = {2025},
  volume = {},
  number = {},
  pages = {14124-14131},
  url = {https://rap-lab.github.io/documents/publications/2025_IROS_MCPFTT_XuemianWu.pdf},
  doi = {10.1109/IROS60139.2025.11246219}
}

@article{ren23cbssTRO,
  author = {Ren, Zhongqiang and Rathinam, Sivakumar and Choset, Howie},
  journal = {IEEE Transactions on Robotics},
  title = {{CBSS}: A New Approach for Multiagent Combinatorial Path Finding},
  year = {2023},
  volume = {39},
  number = {4},
  pages = {2669-2683},
  doi = {10.1109/TRO.2023.3266993},
  url = {https://rap-lab.github.io/documents/publications/ren23_CBSS_TRO.pdf},
  code = {https://github.com/rap-lab-org/public_pymcpf}
}

@inproceedings{2025_SOCS_LSRPstar_ShuaiZhou,
  title = {LSRP*: Scalable and Anytime Planning for Multi-Agent Path Finding with Asynchronous Actions (Extended Abstract)},
  author = {Zhou, Shuai and Zhao, Shizhe and Ren, Zhongqiang},
  booktitle = {Proceedings of the International Symposium on Combinatorial Search},
  volume = {18},
  number = {1},
  year = {2025},
  month = jul,
  pages = {275-276},
  doi = {10.1609/socs.v18i1.36016},
  url = {https://rap-lab.github.io/documents/publications/2025_SOCS_LSRPstar_ShuaiZhou.pdf}
}

@article{li2025cbs,
  title={Revisiting Conflict Based Search with Continuous-Time},
  author={Li, Andy and Chen, Zhe and Harabor, Danial and Vered, Mor},
  journal={arXiv preprint arXiv:2501.07744},
  year={2025}
}

@inproceedings{phillips2011sipp,
	title={Sipp: Safe interval path planning for dynamic environments},
	author={Phillips, Mike and Likhachev, Maxim},
	booktitle={2011 IEEE International Conference on Robotics and Automation},
	pages={5628--5635},
	year={2011},
	organization={IEEE}
}

@inproceedings{ren2021loosely,
  title={Loosely synchronized search for multi-agent path finding with asynchronous actions},
  author={Ren, Zhongqiang and Rathinam, Sivakumar and Choset, Howie},
  booktitle={2021 IEEE/RSJ International Conference on Intelligent Robots and Systems (IROS)},
  pages={9714--9719},
  year={2021},
  organization={IEEE}
}

@inproceedings{stern2019multi,
  title={Multi-agent pathfinding: Definitions, variants, and benchmarks},
  author={Stern, Roni and Sturtevant, Nathan and Felner, Ariel and Koenig, Sven and Ma, Hang and Walker, Thayne and Li, Jiaoyang and Atzmon, Dor and Cohen, Liron and Kumar, TK and others},
  booktitle={Proceedings of the International Symposium on Combinatorial Search},
  volume={10},
  number={1},
  pages={151--158},
  year={2019}
}

@article{okumura2022priority,
  title = {Priority Inheritance with Backtracking for Iterative Multi-agent Path Finding},
  journal = {Artificial Intelligence},
  pages = {103752},
  year = {2022},
  issn = {0004-3702},
  doi = {https://doi.org/10.1016/j.artint.2022.103752},
  author = {Keisuke Okumura and Manao Machida and Xavier Défago and Yasumasa Tamura},
}

@article{ANDREYCHUK2022103662,
title = {Multi-agent pathfinding with continuous time},
journal = {Artificial Intelligence},
volume = {305},
pages = {103662},
year = {2022},
issn = {0004-3702},
doi = {https://doi.org/10.1016/j.artint.2022.103662},
url = {https://www.sciencedirect.com/science/article/pii/S0004370222000029},
author = {Anton Andreychuk and Konstantin Yakovlev and Pavel Surynek and Dor Atzmon and Roni Stern}
}

@article{wagner2015subdimensional,
  title={Subdimensional expansion for multirobot path planning},
  author={Wagner, Glenn and Choset, Howie},
  journal={Artificial intelligence},
  volume={219},
  pages={1--24},
  year={2015},
  publisher={Elsevier}
}

@article{sharon2015conflict,
  title={Conflict-based search for optimal multi-agent pathfinding},
  author={Sharon, Guni and Stern, Roni and Felner, Ariel and Sturtevant, Nathan R},
  journal={Artificial intelligence},
  volume={219},
  pages={40--66},
  year={2015},
  publisher={Elsevier}
}

@inproceedings{de2013push,
  title={Push and rotate: cooperative multi-agent path planning},
  author={De Wilde, Boris and Ter Mors, Adriaan W and Witteveen, Cees},
  booktitle={Proceedings of the 2013 international conference on Autonomous agents and multi-agent systems},
  pages={87--94},
  year={2013}
}

@inproceedings{yu2013structure,
  title={Structure and intractability of optimal multi-robot path planning on graphs},
  author={Yu, Jingjin and LaValle, Steven},
  booktitle={Proceedings of the AAAI Conference on Artificial Intelligence},
  volume={27},
  number={1},
  pages={1443--1449},
  year={2013}
}

@inproceedings{walker2018extended,
  title={Extended Increasing Cost Tree Search for Non-Unit Cost Domains.},
  author={Walker, Thayne T and Sturtevant, Nathan R and Felner, Ariel},
  booktitle={IJCAI},
  pages={534--540},
  year={2018}
}

@inproceedings{li2019multi,
  title={Multi-agent path finding for large agents},
  author={Li, Jiaoyang and Surynek, Pavel and Felner, Ariel and Ma, Hang and Kumar, TK Satish and Koenig, Sven},
  booktitle={Proceedings of the AAAI Conference on Artificial Intelligence},
  volume={33},
  number={01},
  pages={7627--7634},
  year={2019}
}

@inproceedings{barer2014suboptimal,
  title={Suboptimal variants of the conflict-based search algorithm for the multi-agent pathfinding problem},
  author={Barer, Max and Sharon, Guni and Stern, Roni and Felner, Ariel},
  booktitle={Proceedings of the international symposium on combinatorial Search},
  volume={5},
  number={1},
  pages={19--27},
  year={2014}
}

@inproceedings{li2021eecbs,
  title={Eecbs: A bounded-suboptimal search for multi-agent path finding},
  author={Li, Jiaoyang and Ruml, Wheeler and Koenig, Sven},
  booktitle={Proceedings of the AAAI conference on artificial intelligence},
  volume={35},
  number={14},
  pages={12353--12362},
  year={2021}
}

@article{okumura2021time,
  title={Time-Independent Planning for Multiple Moving Agents},
  volume={35},
  number={13},
  journal={Proceedings of the AAAI Conference on Artificial Intelligence},
  author={Okumura, Keisuke and Tamura, Yasumasa and Défago, Xavier},
  year={2021},
  month={May},
  pages={11299-11307}
}

@inproceedings{li2022mapf,
  title={MAPF-LNS2: Fast repairing for multi-agent path finding via large neighborhood search},
  author={Li, Jiaoyang and Chen, Zhe and Harabor, Daniel and Stuckey, Peter J and Koenig, Sven},
  booktitle={Proceedings of the AAAI Conference on Artificial Intelligence},
  volume={36},
  number={9},
  pages={10256--10265},
  year={2022}
}

@inproceedings{zhou2025loosely,
  title={Loosely Synchronized Rule-Based Planning for Multi-Agent Path Finding with Asynchronous Actions},
  author={Zhou, Shuai and Zhao, Shizhe and Ren, Zhongqiang},
  booktitle={Proceedings of the AAAI Conference on Artificial Intelligence},
  volume={39},
  number={14},
  pages={14763--14770},
  year={2025}
}

@article{combrink2025sound,
  title={Optimal Multi-agent Path Finding in Continuous Time},
  author={Combrink, Alvin and Roselli, Sabino Francesco and Fabian, Martin},
  journal={arXiv preprint arXiv:2508.16410},
  year={2025}
}
